\def\STATE{\State}
\def\FOR{\For}
\def\ENDFOR{\EndFor}
\newtheorem{theorem}{Theorem}
\newtheorem{lemma}{Lemma}
\newtheorem{corollary}{Corollary}
\begin{document}
%
% paper title
% Titles are generally capitalized except for words such as a, an, and, as,
% at, but, by, for, in, nor, of, on, or, the, to and up, which are usually
% not capitalized unless they are the first or last word of the title.
% Linebreaks \\ can be used within to get better formatting as desired.
% Do not put math or special symbols in the title.
\title{Memory-Constrained No-Regret Learning in Adversarial Multi-Armed Bandits}
%
%
% author names and IEEE memberships
% note positions of commas and nonbreaking spaces ( ~ ) LaTeX will not break
% a structure at a ~ so this keeps an author's name from being broken across
% two lines.
% use \thanks{} to gain access to the first footnote area
% a separate \thanks must be used for each paragraph as LaTeX2e's \thanks
% was not built to handle multiple paragraphs
%

\author{Xiao~Xu and
        Qing~Zhao,~\IEEEmembership{Fellow,~IEEE}
       % <-this % stops a space
\thanks{Xiao Xu and Qing Zhao are with the School
of Electrical and Computer Engineering, Cornell University, Ithaca,
NY, 14850 USA (e-mails: \{xx243, qz16\}@cornell.edu).}% <-this % stops a space% <-this % stops a space
\thanks{This work was supported by the National Science Foundation under Grant CCF-1815559. An earlier version was posted on arXiv at https://arxiv.org/abs/2002.11804.}
}

\maketitle

% As a general rule, do not put math, special symbols or citations
% in the abstract or keywords.
\begin{abstract}
An adversarial multi-armed bandit problem with memory constraints is studied where the memory for storing arm statistics is only in a sublinear order of the number of arms. A hierarchical learning framework that offers a sequence of operating points on the tradeoff curve between the regret order and memory complexity is developed. Its sublinear regret orders are established under both weak regret and shifting regret notions. This work appears to be the first on memory-constrained bandit problems in the adversarial setting.
\end{abstract}

% Note that keywords are not normally used for peerreview papers.
\begin{IEEEkeywords}
Adversarial multi-armed bandits, no-regret learning, memory complexity.
\end{IEEEkeywords}

% For peer review papers, you can put extra information on the cover
% page as needed:
% \ifCLASSOPTIONpeerreview
% \begin{center} \bfseries EDICS Category: 3-BBND \end{center}
% \fi
%
% For peerreview papers, this IEEEtran command inserts a page break and
% creates the second title. It will be ignored for other modes.
\IEEEpeerreviewmaketitle

\section{Introduction}
% The very first letter is a 2 line initial drop letter followed
% by the rest of the first word in caps.
% 
% form to use if the first word consists of a single letter:
% \IEEEPARstart{A}{demo} file is ....
% 
% form to use if you need the single drop letter followed by
% normal text (unknown if ever used by the IEEE):
% \IEEEPARstart{A}{}demo file is ....
% 
% Some journals put the first two words in caps:
% \IEEEPARstart{T}{his demo} file is ....
% 
% Here we have the typical use of a "T" for an initial drop letter
% and "HIS" in caps to complete the first word.
\IEEEPARstart{F}{irst} posed in \cite{thompson1933likelihood} for the application of clinical trials, the multi-armed bandit (MAB) problem has been studied under various models and across diverse application domains~\cite{zhao2019multi}. The name of the problem comes from likening an archetypical single-player online learning problem to playing a multi-armed slot machine (known as a bandit for its ability of emptying the player's pocket). Each arm, when pulled, generates rewards according to an unknown stochastic model or in an adversarial fashion. Only the reward of the chosen arm is revealed after each play. The objective of the player is an arm selection policy that maximizes the cumulative reward over $T$ plays. The bandit feedback model where an arm can only be observed after it is played induces the tradeoff between exploration (to gather information from less explored arms) and exploitation (to
maximize immediate reward by prioritizing arms with a good reward history).

Depending on the generative model of arm rewards, bandit problems can be categorized into the stochastic and the adversarial settings. In the former, rewards from successive plays of an arm obey a given, albeit unknown, stochastic model. In the latter, rewards are assigned by an adversary. Regardless of the reward models, a commonly adopted performance measure of an arm selection policy is regret, defined as the cumulative reward loss against a properly defined benchmark policy that assumes hindsight vision or certain clairvoyant knowledge about the underlying generative model of arm rewards. The difference between the regret measures in the stochastic and the adversarial settings is in the adopted benchmark policies.

A canonical model for stochastic bandits assumes that rewards from each arm are drawn i.i.d. from a fixed distribution. In this case, the benchmark policy in the regret definition is one that assumes the knowledge of the stochastic model, hence plays the arm with the greatest mean throughout the time horizon. The regret is measured in expectation taken over the random process of reward realizations induced by the arm selection policy. Representative studies include \cite{lai1985asymptotically,auer2002finite,garivier2011kl,vakili2013deterministic}.

The adversarial bandit problem, first studied in \cite{auer1995gambling}, is closely related to the problem of learning in repeated unknown games. In the game setting, a player's reward of taking a particular action (i.e., playing a particular arm) is jointly determined by the payoff function of the game and the actions taken by all opponents.  From the perspective of a single player, the reward can be viewed as assigned by an adversary aggregating the interactions with all opponents in the game~\cite{cesa2006prediction}. Connections between certain system-level objectives of the game (e.g., convergence to equilibria) and the regret performance of a single player against a collective adversary have been revealed~\cite{young2004strategic,lykouris2016learning,duvocelle2018learning}. See a recent survey on distributed leaning in multi-agent systems~\cite{xu2020distributed}.

Various benchmark policies have been considered for regret measures in the adversarial setting. In particular, weak regret is defined against a benchmark policy that plays the best (fixed) arm in terms of the cumulative reward in hindsight~\cite{auer2002nonstochastic}. The weak regret notion corresponds to the external regret in the game setting. A stronger regret notion is the shifting regret, where the benchmark policy is allowed to switch arms over time but limited by a hardness constraint on the number of~switchings. 

A policy is said to achieve no-regret learning if, for every sequence of rewards assigned by the adversary, the adopted regret measure has a sublinear growth rate with $T$. In other words, the policy offers, asymptotically as~$T\to \infty$, the same average reward as the specific benchmark adopted in the corresponding regret measure. A number of learning algorithms have been developed to achieve no-regret learning under various regret notions \cite{auer2002nonstochastic,audibert2009minimax}. It has been shown that randomization in arm selection is necessary for achieving no-regret learning \cite{bubeck2012regret}.

 \subsection{Main Results}
Memory complexity has not been considered in adversarial bandits. Existing learning policies require a memory space with size linear in the number $K$ of arms to store arm reward statistics. Such a linear order of memory complexity may render these learning policies impractical in applications involving a large action/arm space, for example, recommendation systems and dynamic routing in urban transportation and computer networks.

In this paper, we study the memory-constrained adversarial bandit problem where a learning policy is only given $M$ words of memory for storing input values and necessary variables, where $M$ is in a sublinear order of $K$. The memory constraint entails that past reward observations, except for a diminishing fraction, need to be either forgotten or summarized with certain succinct statistics. No-regret learning hence hinges on not only a balance between exploration and exploitation, but also a balance between what to remember and what to forget.

In this work, we develop a hierarchical learning framework that offers a sequence of operating points on the tradeoff curve between the regret order and memory complexity. Referred to as \emph{HLMC (Hierarchical Learning with Memory Constraints)}, the proposed learning framework partitions the arms into multi-level groups and the time horizon into multi-level epochs through a tree-structured hierarchy. The depth of the tree is chosen to trade off regret order with memory complexity: a deeper tree leads to a lower memory complexity at the price of a higher regret order. Using aggregated statistics for arm groups at all levels, the HLMC framework recursively selects arm groups (referred to as super arms) across epochs (referred to as super time steps) according to the tree hierarchy. Within each epoch, a memory-unconstrained learning policy can be employed to govern the selection of arm groups at the corresponding level. This hierarchical learning framework decouples the design issue of to-remember-or-to-forget induced by the memory constraint from the exploration-exploitation tradeoff induced by the bandit feedback. It hence provides a general framework for extending memory-unconstrained learning policies to memory-constrained settings.

We establish the regret performance and memory complexity of HLMC as a function of $D$, the depth of the adopted tree hierarchy. In particular, for $D=2$, HLMC consists of a leaf level of $K$ individual arms and a higher level of $\Theta(\sqrt{K})$ arm groups, each consisting of $\Theta(\sqrt{K})$ arms. In this case, the memory required by HLMC consists of two parts: one for storing group statistics used by the group-level selection strategy, the other for arm statistics within the selected group for arm selection. We show that the memory complexity of HLMC is~$\Theta(\sqrt{K})$. In terms of regret performance, we show that no-regret learning is achieved by HLMC under both weak regret and shifting regret when suitable memory-unconstrained policies are employed as learning routines at each level. Specifically, with a sublinear-order memory complexity of $\Theta(\sqrt{K})$, HLMC offers a weak regret of $O(T^{3/4}K^{1/4})$ and a shifting regret of $O(T^{3/4}V^{1/4}K^{1/4})$ up to logarithmic factors, where $V$ is the hardness constraint on the benchmark policy. 

In the general case with a $D$-level hierarchy ($D\ge 2$), the memory required by HLMC consists of $D$ parts for storing group statistics at all $D$ levels. We show that the memory complexity is of order $\Theta(DK^{1/D})$ with a weak regret order of $O(DT^{1-\frac{1}{2D}}K^{\frac{1}{2D}})$ up to a logarithmic factor. The tradeoff between regret order and memory complexity of HLMC is therefore quantified through the discrete depth $D$ of the adopted hierarchy, which can be designed in accordance with the size of the available memory space. At the two ends of the spectrum is $D=\lceil\log_2 K\rceil$ and $D=1$. In the former, HLMC achieves no-regret learning under the notion of weak regret with a memory complexity that is only logarithmic in $K$. In the latter, the problem degenerates to the memory-unconstrained setting, and HLMC reduces to a memory-unconstrained learning routine.

\subsection{Related Work}
There is a growing body of work on adversarial bandits, in both the canonical form \cite{auer1995gambling,auer2002nonstochastic,audibert2009minimax} and various variants arising in specific applications (see, for example, \cite{bande2019adversarial,vural2019minimax}). However, memory constraints have not been considered. Most related to this work are two recent studies on memory-constrained \emph{stochastic} bandit models \cite{liau2018stochastic,chaudhuri2019regret}. In the stochastic setting in \cite{liau2018stochastic,chaudhuri2019regret}, rewards from each arm are drawn i.i.d. from a \emph{fixed} distribution. Based on the sample sizes and the gaps in the sample mean, suboptimal arms can be identified up to a desired level of accuracy and subsequently eliminated from memory. Indeed, the key idea of the two algorithms proposed in~\cite{liau2018stochastic}~\cite{chaudhuri2019regret} is based on best arm identification techniques (see~\cite{bubeck2009pure} for examples). Specifically, the memory constraint is dealt with by exploring and comparing a subset of arms over a period of time and successively eliminating suboptimal arms.

The above learning policies for memory-constrained stochastic bandits, however, do not apply to the adversarial setting. Being deterministic, they incur linear regret orders against adversaries. This is also confirmed in our numerical studies in Sec. \ref{sec:numerical}. The fundamental difference between a memory-constrained adversarial bandit problem and its stochastic counterpart is that the best arm in hindsight of an adversarially chosen reward sequence can not be reliably inferred from partial observations. As a result, no arms can be reliably eliminated from consideration at any point in the learning horizon without causing significant regret. In the proposed HLMC, the memory constraint is dealt with by storing succinct aggregated arm statistics rather than completely forgetting certain set of arms.

Another type of memory constraint that has been studied in the MAB literature is temporal across time steps: a policy can only make decisions based on the reward outcomes of the $m$ most recent plays. This problem was first considered in \cite{robbins1956sequential} where a two-armed bandit problem with Bernoulli rewards was studied.  It was later shown in \cite{cover1968note} that there exists a policy with $m=2$ that achieves an asymptotically optimal average reward in the two-armed bandit instance. The decision process with temporal memory constraints was further modeled as a finite-state machine in \cite{cover1970two}, where the past reward history was aggregated as a finite-valued statistic. The objective considered in these studies was the asymptotic convergence of the empirical average reward. Analysis on the convergence rate or the regret order, however, was lacking. The objective of minimizing regret with temporal memory constraints was considered in \cite{lu2011making} under the full-information feedback setting (i.e., the rewards of all arms that the player could have played are revealed after every time step). A learning algorithm achieving  no-regret learning with~$O(m^K)$ states (each arm statistic can take $O(m)$ values) was developed. However, the full-information feedback setting is fundamentally different from the bandit setting studied in this paper. Moreover, the proposed learning algorithm needs to store a statistic of every arm and the total number of states is exponential in $K$. 

\section{Problem Formulation}

We consider an adversarial bandit problem with a finite arm set $\mathcal{A}=\{1,2,...,K\}$.  At each time $t=1,2,...,T$, a player chooses one arm to play. The reward $r_{i,t}\in[0,1]$ of playing an arm $i$ at time $t$ is assigned by an adversary. We assume that the adversary is \emph{oblivious}, i.e., the assignment of the reward at time $t$ is independent of the player's past actions. Equivalently, an oblivious adversary determines the sequence of reward vectors $((r_{1,t},...,r_{K,t}))_{t=1}^{T}$ ahead of time. We assume that the player can only observe the reward of the selected arm at each time.

The objective of the player is an online learning policy $\pi$ that specifies a sequential arm selection rule at each time $t$ based on the observation history. We assume that the policy can only use $M$ ($M=o(K)$ as $K\to\infty$) words of memory space to store input values and necessary parameters. We follow the memory model studied in \cite{chaudhuri2019regret} where each of the variables used by the policy takes $1$ word of memory\footnote{The number of bits in a word depends on how real numbers are stored in the memory, which is out of the scope of this paper.} and thus, a policy with memory size $M$ can only store $M$ statistics at any given time to summarize the reward history of arms.

The performance of policy $\pi$ is measured by regret, which is defined as the reward loss against the best benchmark action sequence
 $a^T=(a_1,...,a_T)$ with the greatest cumulative reward, i.e.,
\begin{align}
R_{\pi}(T)=\max_{a^T\in\mathcal{A}^T}\sum_{t=1}^{T}r_{a_t,t}-\sum_{t=1}^{T}r_{\pi_t,t},
\end{align}
where $\mathcal{A}^T$ is the set of all possible action sequences with length $T$ and $\pi_t$ is the arm selected by policy $\pi$ at time $t$. When there is no ambiguity, the notation is simplified to $R(T)$. 

As the regret $R(T)$ can be randomized due to the potential randomness of the arm selection policy $\pi$, we consider two types of \emph{no-regret learning} conditions in this paper. A policy~$\pi$ is said to achieve no-regret learning \emph{in expectation} if, for every sequence of rewards $((r_{1,t},...,r_{K,t}))_{t=1}^{T}$, the expected regret $E_{\pi}[R(T)]=o(T)$ as $T\to\infty$, where the expectation is taken over the possible randomness of $\pi$. The second condition states that a policy $\pi$ achieves no-regret learning \emph{with high probability} if, for every sequence of rewards and every given $\delta\in(0,1)$, the regret $R(T)=o(T)$ as $T\to\infty$ with probability at least $1-\delta$.

It is not difficult to see that achieving no-regret learning, either in expectation or with high probability, is impossible if the benchmark sequence is chosen arbitrarily \cite{auer2002nonstochastic}. Therefore, certain restrictions on the benchmark sequence is necessary to make the problem feasible. In this paper, we consider two types of regret notions with different restrictions on the benchmark sequence. The first regret notion is the so-called \emph{weak regret} where the benchmark sequence consists of a single arm, i.e.,
\begin{align}
R_{\textrm{w}}(T)=\max_{i\in\mathcal{A}}\sum_{t=1}^{T}r_{i,t}-\sum_{t=1}^{T}r_{\pi_t,t}.
\end{align}

A stronger regret notion is the so-called \emph{shifting regret} where the benchmark sequence is constrained by its \emph{hardness}. Specifically, the hardness of a sequence $a^T=(a_1,...,a_T)$ measures the total number of arm switchings over time, i.e.,
\begin{align}
H(a^T)\triangleq 1+\sum_{t=1}^{T-1}\mathbb{I}(a_t\neq a_{t+1}),
\end{align}
where $\mathbb{I}(\cdot)$ is the indicator function. The shifting regret with a hardness constraint $V$ is defined as
\begin{align}
R_{\textrm{s}}(T,V)=\max_{a^T:H(a^T)\le V}\sum_{t=1}^{T}r_{a_t,t}-\sum_{t=1}^{T}r_{\pi_t,t}.
\end{align}
It is clear that the shifting regret is a stronger notion than the weak regret: no-regret learning under the former implies no-regret learning under the latter, but not vice versa.

To achieve no-regret learning under various regret notions, a number of learning routines have been developed in the memory-unconstrained setting.  Representative algorithms include EXP3, EXP3.P, and EXP3.S that achieve no-regret learning under the notion of weak regret in expectation, with high probability, and under the notion of shifting regret in expectation, respectively. We summarizes the details of these algorithms in Appendix A.

\section{Hierarchical Learning with Memory Constraints}\label{sec:HLMC}
In this section, we propose a general learning structure: \emph{HLMC (Hierarchical Learning with Memory Constraints)} for the memory-constrained adversarial bandit problem. We first present the general framework of HLMC with a multi-level hierarchy on the partitions of the arms and the time horizon. Then we use a representative case with a two-level hierarchy to illustrate its details.

\subsection{A General Framework with Multi-Level Hierarchy}

The key to the balance between what to remember and what to forget induced by memory constraints is to summarize past reward observations through certain succinct statistics. This motivates partitions of the arms into tree-structured groups and the time horizon into tree-structure epochs through a $D$-level hierarchy. At every level of the hierarchy, reward observations from arms within a group during an epoch is aggregated as a single group statistic. Using these group statistics, the HLMC structure carries out a recursive learning procedure that successively selects and zooms into an arm group during every corresponding epoch according to the tree hierarchy. See Fig.~\ref{fig:HLMC} for an example of HLMC with a three-level hierarchy.

Through the design of the depth $D$ of the adopted hierarchy, HLMC achieves different operating points on the tradeoff curve between the regret order and memory complexity. Intuitively, a deeper hierarchy requires a smaller memory space for storing reward statistics, but incurs a higher regret order. See Sec. \ref{sec:multi} for detailed discussions.

It should be noted that HLMC is a general learning framework that decouples the tradeoff between what to remember and what to forget from the one between exploration and exploitation. The solution to the former is the design of the aggregated group statistics and the recursive learning structure. For the latter, different learning routines developed in the memory-unconstrained setting can be plugged in for group selection at each level with the goal of minimizing various notions of regret. 

\begin{figure}[t]
\hspace{-.9cm}
	\includegraphics[width=1.15\columnwidth]{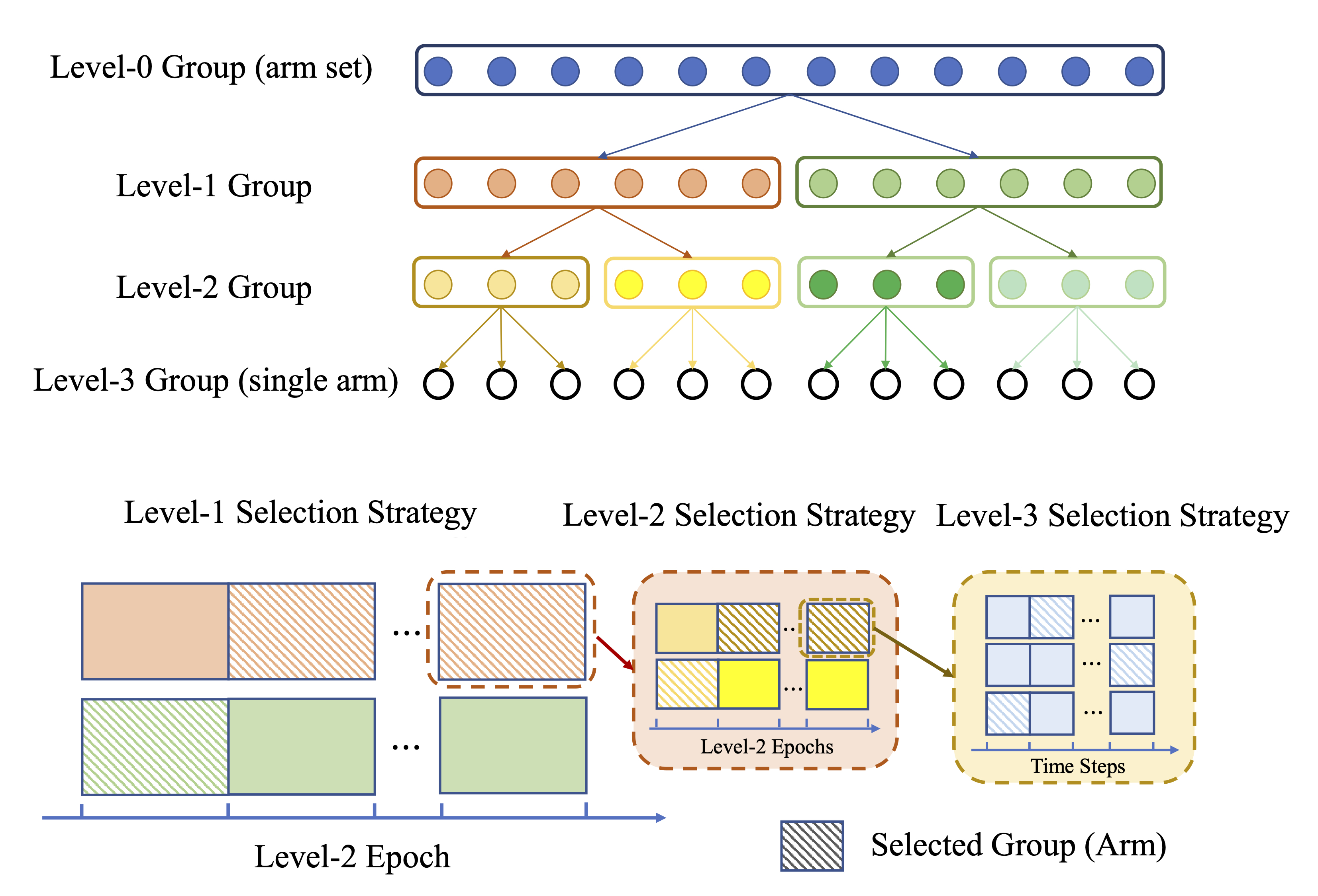}
	\caption{HLMC with a three-level hierarchy:  the arm set is partitioned into two level-1 groups and every level-1 group is partitioned into two level-2 groups (see Sec. \ref{sec:multi} for detailed discussions on the number of groups at each level). Every level-2 group consists of 3 arms (level-3 groups). The time horizon is partitioned in a similar way into multi-level epochs. At the beginning of every level-$\ell$ epoch ($1\le \ell\le 3$), a level-$\ell$ arm group is selected according to a level-$\ell$ strategy. Within this epoch, a level-$(\ell+1)$ strategy is conducted to select level-$(\ell+1)$ arm groups with in the selected level-$\ell$ group across level-$(\ell+1)$ epochs.}
	\label{fig:HLMC}

\end{figure}

\subsection{A Representative Case with Two-Level Hierarchy}\label{subsec:HLMC-2L}
We use $D=2$ as a representative case to present the details of HLMC. In the two-level hierarchy, the set $\mathcal{A}$ of arms is partitioned into equal-sized groups $\{\mathcal{A}_{\ell}\}_{\ell=1}^{L}$ where 
\begin{align}\label{eq:armpartition}
\mathcal{A}_{\ell}=\{1+N(\ell-1),...,\min(N\ell,K)\},
\end{align}
$N=\lceil\sqrt{K}\rceil$ is the group size (note that the number of arms in the last group may be smaller than $N$), and $L=\lceil\frac{K}{N}\rceil$ is the number of groups. The time horizon is partitioned into equal-length epochs $\{\mathcal{T}_s\}_{s=1}^{S}$ where
\begin{align}\label{eq:epochpartition}
\mathcal{T}_s=[1+\Delta(s-1),\min(\Delta S, T)],
\end{align}
$\Delta\in\mathbb{N}^+$ is the epoch length to be determined later, and $S=\lceil\frac{T}{\Delta}\rceil$ is the number of epochs. Note that the length of the $S$-th epoch may be smaller than $\Delta$.

By treating each group $\mathcal{A}_{\ell}$ as a ``super arm'' $\ell$ and each epoch $\mathcal{T}_s$ a ``super time-step'' $s$, we reduce the group selection problem to a classic memory-unconstrained adversarial bandit problem. Specifically, with $M\ge L$, existing learning strategies developed for memory-unconstrained adversarial bandits can be adopted using $L$ words of memory space to select groups across epochs, without violating the memory constraint. The reward of playing a ``super arm'' $\ell_s$ at a ``super time-step'' $s$ is defined as the average reward per play obtained from the corresponding arm group $\mathcal{A}_{\ell_s}$ during the corresponding epoch~$\mathcal{T}_{s}$, i.e., 
\begin{align}\label{eq:y}
y_{\ell_s,s}=\frac{1}{|\mathcal{T}_s|}\sum_{t\in\mathcal{T}_s}r_{i_t,t},
\end{align}
where $i_t\in\mathcal{A}_{\ell_s}$ is the arm selected at time $t$. 

The group-level strategy uses an aggregated statistic of every arm group, which is stored throughout the time horizon, for group selection across epochs. Once a group $\mathcal{A}_{\ell_s}$ is selected at the beginning of every epoch~$\mathcal{T}_s$, an arm-level learning routine is employed on $\mathcal{A}_{\ell_s}$ based on individual statistics of arms within the group. These arm statistics are updated at every time step in $\mathcal{T}_s$ and are forgotten at the end of the epoch. After each epoch, the average reward~$y_{\ell_s,s}$ per play is used to update the aggregated statistic of the selected group~$\mathcal{A}_{\ell_s}$. The details of HLMC with a two-level hierarchy is summarized in Algorithm~\ref{alg:HLMC}.

\begin{algorithm}[h!]
   \caption{HLMC with a Two-Level Hierarchy}
   \label{alg:HLMC}
   \begin{algorithmic}
   \STATE {\bfseries Input:} $T$ the time length, $\mathcal{A}$ the set of $K$ arms, and $\Delta>0$ the epoch length.

      \STATE Obtain arm group partition $\{\mathcal{A}_{\ell}\}_{\ell=1}^{L}$ according to (\ref{eq:armpartition}).
      \STATE Obtain epoch partition $\{\mathcal{T}_{s}\}_{s=1}^{S}$ according to (\ref{eq:epochpartition}).
      \STATE Initialize and store the statistics of every arm group.
       \FOR{$s=1,2,...,S$}
   	        \STATE {Select arm group $\ell_s$ according to the group-level}
   	        \STATE{selection strategy.}
   	        \STATE Initialize and store the statistics of every arm in $\mathcal{A}_{\ell_s}$.
   	        \STATE Initialize $y_{\ell_s,s}=0,\tau=0$.
   			\FOR{$t\in\mathcal{T}_s$}
   	        	\STATE Play arm $i_t$ according to the arm-level selection 
   	        	\STATE strategy and receive reward $r_{i_t,t}$.
   	        	\STATE Update arm statistics in the memory using $r_{i_t,t}$.
   	        	\STATE Update $y_{\ell_s,s}=\frac{y_{\ell_s,s}\tau+r_{i_t,t}}{\tau+1}$, $\tau=\tau+1$.
   	      	\ENDFOR
   	        \STATE Update all group statistics in the memory using $y_{\ell_s,s}$.
       \ENDFOR
\end{algorithmic}
\end{algorithm}

\section{Memory Complexity and Regret Performance in the Two-Level Case}\label{sec:analysis}
In this section, we analyze the memory complexity and regret performance of the proposed HLMC learning framework in the two-level case. We notice that in HLMC, the group-level strategy requires~$L$ words of memory to store a statistic of every arm group. Once a group is selected, the statistics of all arms within the selected group should also be stored. Hence,~$N$ additional words of memory are needed. As a result, the total memory size required by the HLMC framework is $N+L$, which is of order $\Theta(\sqrt{K})$.

In terms of regret performance, it is clear that the regret order in $T$ achieved by HLMC depends on the specific learning routines employed at both group and arm levels. In the following three subsections, we discuss minimizing weak regret in expectation, with high probability, and minimizing shifting regret in expectation, respectively through plugging in different learning routines to the two levels.
\subsection{Minimizing Weak Regret in Expectation}\label{subsec:weakregret}
We first show that adopting EXP3 at both group and arm levels in the HLMC framework with learning rates~$\gamma_1$ and $\gamma_2$ respectively guarantees a sublinear regret order in~$T$ under the notion of expected weak regret.

\begin{theorem}\label{thm:weakregretUB}
For any $T$ and $K$, if the input parameter $\Delta=\left\lceil\sqrt{\frac{TN\ln N}{L\ln L}}\right\rceil$ (where $N,L$ are defined in Sec. \ref{subsec:HLMC-2L}), adopting EXP3 at both group and arm levels with learning rates $\gamma_1=\sqrt{\frac{L\ln{L}}{2S}}$ and $\gamma_2=\sqrt{\frac{N\ln N}{2\Delta}}$ guarantees that, for every assignment of the reward sequence, the expected weak regret of HLMC is upper bounded by:
\begin{align}\label{eq:weakregretUB}
	\mathbb{E}_{\textrm{HLMC}}\left[R_{\textrm{w}}(T)\right]\le (4+2\sqrt{2})T^{\frac{3}{4}}K^{\frac{1}{4}}(\ln{K})^{\frac{1}{2}}.
\end{align}
\end{theorem}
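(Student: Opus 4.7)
The plan is to bound the expected weak regret of HLMC by decomposing it along the two levels of the hierarchy: one piece captures the arm-level regret inside the group that contains the best arm in hindsight, and the other captures the group-level regret of the super bandit against that group. For each piece I would invoke the classical weak-regret bound for EXP3 (recalled in Appendix~A): for $n$ arms, horizon $T'$, and rewards in $[0,1]$, the expected regret is $2\sqrt{2T'n\ln n}$. Applied at the arm level with $n=N$ and $T'=\Delta$ inside a single epoch, this gives per-epoch regret at most $2\sqrt{2\Delta N\ln N}$; applied at the group level with $n=L$ and $T'=S$ but with super-arm rewards $|\mathcal{T}_s|\tilde y_{\ell,s}$ living in $[0,\Delta]$, the bound scales by $\Delta$ and becomes $2\Delta\sqrt{2SL\ln L}$.

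Let $i^*\in\arg\max_i\sum_t r_{i,t}$ and let $\ell^*$ be the group that contains $i^*$. Using $\sum_t r_{\pi_t,t}=\sum_s|\mathcal{T}_s|\,y_{\ell_s,s}$, I would split $\mathbb{E}[R_{\mathrm w}(T)]$ into
\begin{align*}
A &= \sum_t r_{i^*,t}-\mathbb{E}\Big[\sum_s|\mathcal{T}_s|\,\tilde y_{\ell^*,s}\Big], \\
B &= \mathbb{E}\Big[\sum_s|\mathcal{T}_s|\,\tilde y_{\ell^*,s}-\sum_s|\mathcal{T}_s|\,y_{\ell_s,s}\Big],
\end{align*}
where $\tilde y_{\ell,s}$ denotes the counterfactual average reward an independent arm-level EXP3 run would produce on $\mathcal{A}_\ell$ over $\mathcal{T}_s$, coupled so that $\tilde y_{\ell_s,s}=y_{\ell_s,s}$. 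Because $i^*\in\mathcal{A}_{\ell^*}$, the arm-level EXP3 bound applied epoch-by-epoch gives $A\le 2S\sqrt{2\Delta N\ln N}$. For $B$, I would use $\sum_s|\mathcal{T}_s|\,\tilde y_{\ell^*,s}\le\max_\ell\sum_s|\mathcal{T}_s|\,\tilde y_{\ell,s}$ and invoke the group-level EXP3 regret bound to obtain $B\le 2\Delta\sqrt{2SL\ln L}$.

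The delicate step, which I expect to be the main obstacle, is justifying the group-level EXP3 bound when the super reward sequence $\{\tilde y_{\ell,s}\}$ is itself random. My approach is to enlarge the probability space so that an independent copy of the arm-level randomness for each pair $(\ell,s)$ is pre-sampled; because the underlying adversary is oblivious, conditioning on all such randomness renders $\{|\mathcal{T}_s|\tilde y_{\ell,s}\}$ a deterministic oblivious sequence in $[0,\Delta]$ to which EXP3's worst-case bound applies, and averaging over the arm-level coins preserves the inequality. Combining $A$ and $B$ yields
\begin{align*}
\mathbb{E}[R_{\mathrm w}(T)]\le 2\Delta\sqrt{2SL\ln L}+2S\sqrt{2\Delta N\ln N},
\end{align*}
and the prescribed choice $\Delta=\lceil\sqrt{TN\ln N/(L\ln L)}\rceil$ equalizes the two contributions. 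Substituting $S=\lceil T/\Delta\rceil$, $N=\lceil\sqrt K\rceil$, $L=\lceil K/N\rceil$, and using $LN\le K+N$ together with $\ln L,\ln N\le\tfrac{1}{2}\ln K+O(1)$ collapses the right-hand side to the claimed $(4+2\sqrt 2)\,T^{3/4}K^{1/4}(\ln K)^{1/2}$, with the extra $2\sqrt 2$ absorbing the ceiling and partial-epoch corrections.
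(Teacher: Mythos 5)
Your proposal is correct and follows the same overall route as the paper: the same two-part decomposition (your $A$ is the paper's arm-level loss $R_2(T)$, your $B$ its group-level loss $R_1(T)$), the same per-epoch application of the EXP3 weak-regret bound at the arm level, the same reduction of group selection to a super bandit with rewards scaled by $\Delta$, and the same balancing choice of $\Delta$ yielding $2\Delta\sqrt{2SL\ln L}+2S\sqrt{2\Delta N\ln N}$. The one place where you genuinely diverge is the step you correctly flag as delicate: justifying the group-level EXP3 guarantee when the super-rewards $y_{\ell,s}$ are themselves random. The paper handles this by proving a dedicated lemma (Lemma~\ref{lemma:groupregret}, Appendix~B) that reruns the EXP3 potential-function argument and takes conditional expectations over the noisy observations, producing a bound stated directly in terms of the means $x_{\ell,s}$; you instead pre-sample an independent arm-level run for every pair $(\ell,s)$, condition on these coins so that $\{\tilde y_{\ell,s}\}$ becomes a fixed oblivious sequence (legitimate here because the arm-level routine is restarted each epoch and the adversary is oblivious), apply the black-box EXP3 bound conditionally, and average. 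Both arguments are sound and give the same constants; yours is more modular and avoids reproving the EXP3 analysis, while the paper's is self-contained and makes explicit the independence structure (of $y_{\ell,s}$ across $\ell,s$ and from the selection history) that your coupling implicitly relies on.
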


To obtain the upper bound in Theorem \ref{thm:weakregretUB}, we decompose the expected weak regret into two parts by introducing an intermediate term $C'_{\max}$ as follows: for every fixed reward sequence, let~$i_{\max}$ be the best arm with the greatest cumulative reward over the entire time horizon and~$\mathcal{A}_{\ell_{\max}}$ the arm group to which $i_{\max}$ belongs. We define~$C'_{\max}$ as the expected cumulative reward obtained by running the arm-level EXP3 algorithm with learning rate $\gamma_2$ on $\mathcal{A}_{\ell_{\max}}$ during all epochs,~i.e.,
\begin{align}
C'_{\max}=\sum_{s=1}^{S}\mathbb{E}_{\textrm{Arm-EXP3}(\mathcal{A}_{\ell_{\max}})}\left[\sum_{t\in\mathcal{T}_s}r_{i_t,t}\right],
\end{align}
where $\mathbb{E}_{\textrm{Arm-EXP3}(\mathcal{A}_{\ell_{\max}})}[\cdot]$ denotes the expectation taken over the randomness of the arm-level EXP3 algorithm when conducted on group $\mathcal{A}_{\ell_{\max}}$. Then the expected weak regret of HLMC is decomposed as:
\begin{equation}\label{eq:R}
\begin{aligned}
\mathbb{E}_{\textrm{HLMC}}\left[R_{\textrm{w}}(T)\right]=\underbrace{(C'_{\max}-C_{\textrm{HLMC}})}_{R_1(T)} + \underbrace{(C_{\max}-C'_{\max})}_{R_2(T)},
\end{aligned}
\end{equation}
where
\begin{equation}
\begin{aligned}
&C_{\textrm{HLMC}}=\mathbb{E}_{\textrm{HLMC}}\left[\sum_{t=1}^{T}r_{i_t,t}\right],\\
&C_{\max}=\sum_{t=1}^{T}r_{i_{\max},t}.
\end{aligned}
\end{equation}
Note that in the decomposition, $R_1(T)$ corresponds to the group-level reward loss due to not selecting $\mathcal{A}_{\ell_{\max}}$ at every epoch, and $R_2(T)$ corresponds to the arm-level reward loss due to playing suboptimal arms in~$\mathcal{A}_{\ell_{\max}}$ assuming that group~$\mathcal{A}_{\ell_{\max}}$ is selected at all epochs.

We first upper bound the group-level reward loss $R_1(T)$. Noticing that the arm selection process during every epoch is independent of the group and arm selection history in the past, we can thus rewrite the expected reward of the HLMC policy as follows:
\begin{equation}
\begin{aligned}
&\mathbb{E}_{\textrm{HLMC}}\left[\sum_{t=1}^{T}r_{i_t,t}\right]\\
=&\mathbb{E}_{\textrm{Group-EXP3}}\left[\sum_{s=1}^{S}\mathbb{E}_{\textrm{Arm-EXP3}(\mathcal{A}_{\ell_s})}\left[\sum_{t\in\mathcal{T}_s}r_{i_t,t}\right]\right],\\
\end{aligned}
\end{equation}
where $\mathbb{E}_{\textrm{Group-EXP3}}[\cdot]$ denotes the expectation taken over the randomness of the group-level EXP3 algorithm, and $\mathcal{A}_{\ell_s}$ is the group selected at epoch $s$. To ease the analysis, we assume without losing generality that all epochs have an equal length~$\Delta$. We further define
\begin{align}
x_{\ell,s}=\mathbb{E}_{\textrm{Arm-EXP3}(\mathcal{A}_{\ell})}\left[\frac{1}{|\mathcal{T}_s|}\sum_{t\in\mathcal{T}_s}r_{i_t,t}\right].
\end{align}
It is not difficult to see that
\begin{align}\label{eq:R1}
R_1(T)=\Delta\left(\sum_{s=1}^{S}x_{\ell_{\max},s}-\mathbb{E}_{\textrm{Group-EXP3}}\left[\sum_{s=1}^{S}x_{\ell_s,s}\right]\right).
\end{align} 

It is then clear that upper bounding $R_1(T)$ is equivalent to upper bounding the weak regret of applying the group-level EXP3 algorithm to the adversarial bandit problem constructed by the reduction in Sec. \ref{sec:HLMC}. Specifically, the reward of selecting a group $\mathcal{A}_{\ell}$ at epoch $\mathcal{T}_s$ is defined as $y_{\ell,s}$ according to (\ref{eq:y}) where $i_t$ is randomly selected by the arm-level EXP3 algorithm. Therefore, $y_{\ell,s}$ is a random reward with mean $x_{\ell,s}$. The group selection problem is reduced to a classic memory-unconstrained adversarial bandit problem with noisy observations. It should be noted that after fixing an assignment of the reward sequence $((r_{1,t},...,r_{K,t}))_{t=1}^{T}$, the expected reward $x_{\ell,s}$ is fixed. Meanwhile, the realization of $y_{\ell,s}$ is independent across $\ell, s$ and is independent of the arm (group) selection history up to epoch $s$. We obtain the following result on applying the group-level EXP3 algorithm to the reduced bandit problem.

\begin{lemma}\label{lemma:groupregret}
By choosing $\gamma_1=\sqrt{\frac{L\ln L}{2S}}$, the group-level EXP3 algorithm guarantees that, for every assignment of the reward sequence $((r_{1,t},...,r_{K,t}))_{t=1}^{T}$,
\begin{equation}
\begin{aligned}
\max_{1\le\ell\le L}\sum_{s=1}^{S}x_{\ell,s}-\mathbb{E}_{\textrm{Group-EXP3}}\left[\sum_{s=1}^{S}x_{\ell_s,s}\right]\le 2\sqrt{2SL\ln L},
\end{aligned}
\end{equation}
where $\ell_s$ is the arm group selected by the group-level EXP3 algorithm at epoch $s$.
\end{lemma}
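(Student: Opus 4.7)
The plan is to reduce the statement to the standard regret bound for EXP3 on a conventional memory-unconstrained adversarial bandit problem with $L$ arms and $S$ rounds, where the groups play the role of arms and the epochs play the role of rounds. Since $r_{i,t}\in[0,1]$, every realized group-level reward $y_{\ell,s}$ defined in \eqref{eq:y} also lies in $[0,1]$, so the hypotheses of the standard EXP3 bound (summarized in Appendix A) are met, and the choice $\gamma_1=\sqrt{L\ln L/(2S)}$ is exactly the learning rate that yields the $2\sqrt{2SL\ln L}$ guarantee on $L$ arms over $S$ steps.

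The main care point is that the rewards $y_{\ell,s}$ are themselves random (through the arm-level EXP3 running on the chosen group), so I would first condition on a complete realization of the arm-level randomness. Because the arm-level EXP3 instance on each group $\mathcal{A}_\ell$ uses its own independent source of randomness and the adversary is oblivious, one can imagine all the values $\{y_{\ell,s}\}_{\ell\in[L],s\in[S]}$ as being drawn \emph{before} the group-level game begins; the paragraph preceding the lemma already emphasizes that $y_{\ell,s}$ is independent across $\ell,s$ and independent of the group-selection history up to epoch $s$. Conditioning on any such realization $\{y_{\ell,s}\}$, the group-level process is exactly EXP3 on an oblivious $L$-armed adversarial sequence, so
\begin{equation*}
\max_{1\le\ell\le L}\sum_{s=1}^{S} y_{\ell,s}-\mathbb{E}_{\textrm{Group-EXP3}}\!\left[\sum_{s=1}^{S}y_{\ell_s,s}\,\Big|\,\{y_{\ell,s}\}\right]\le 2\sqrt{2SL\ln L}
\end{equation*}
holds pointwise, with the expectation taken only over the internal randomness of the group-level EXP3.

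Next I would take expectation over the arm-level randomness. For the left side, I use $\mathbb{E}[y_{\ell,s}]=x_{\ell,s}$ together with Jensen applied to the convex function $\max$, giving
\begin{equation*}
\mathbb{E}\!\left[\max_{\ell}\sum_{s} y_{\ell,s}\right]\ge \max_{\ell}\sum_{s}\mathbb{E}[y_{\ell,s}]=\max_{\ell}\sum_{s} x_{\ell,s}.
\end{equation*}
For the second term, since the reward $y_{\ell_s,s}$ received after selecting $\ell_s$ has conditional mean $x_{\ell_s,s}$ and is independent of the group-level selection history, a tower-property argument converts $\mathbb{E}[\sum_s y_{\ell_s,s}]$ into $\mathbb{E}_{\textrm{Group-EXP3}}[\sum_s x_{\ell_s,s}]$. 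Combining these two identities with the conditional EXP3 bound yields the claim.

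The principal obstacle is the independence/tower-property argument: one must formalize that the randomness producing $y_{\ell,s}$ is independent of $\ell_s$ (which is a function of $y_{\ell_1,1},\ldots,y_{\ell_{s-1},s-1}$), so that the conditional mean of $y_{\ell_s,s}$ given the group-level history really is $x_{\ell_s,s}$ even though $\ell_s$ is a random selection. This is where the oblivious-adversary assumption and the decoupling of arm-level and group-level randomness are essential. Once this is set up carefully, Lemma~\ref{lemma:groupregret} follows by quoting the black-box EXP3 bound from Appendix A and applying Jensen's inequality; no new algorithmic analysis is needed.
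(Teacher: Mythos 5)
Your proposal is correct, but it takes a genuinely different route from the paper. The paper does not invoke the black-box EXP3 bound at the group level; instead, Appendix~B re-derives the EXP3 potential-function argument from scratch for the noisy group rewards: it bounds $G_{s+1}/G_s$, sums the logarithms to get a pointwise inequality in terms of the realized $y_{\ell,s}$ and the importance-weighted estimates $\hat{y}_{\ell,s}$, and only then takes expectations in two carefully ordered stages (first over the arm-level randomness conditioned on the group-selection history, using $\mathbb{E}[y_{\ell,s}]=x_{\ell,s}$ and independence, then over the group-level draws using $\mathbb{E}_{\ell_s}[\hat{x}_{\ell,s}]=x_{\ell,s}$). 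Your approach instead pre-draws the whole table $\{y_{\ell,s}\}$ (valid because the adversary is oblivious, the arm-level randomness is fresh and independent across epochs, and the group-level EXP3 guarantee holds for every fixed reward assignment), quotes Lemma~3 conditionally, and de-conditions via Jensen on the benchmark term and the tower property on the algorithm's term. The Jensen step works only because the max sits on the benchmark side, and you correctly flag the tower-property step as the delicate point. What your route buys is modularity: the same argument would go through verbatim if EXP3 were replaced by any other group-level routine with a per-realization regret guarantee, and it avoids re-proving the potential bound. What the paper's route buys is that it never needs the coupling construction and keeps all quantities as conditional expectations within a single probability space, at the cost of repeating the EXP3 analysis. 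Both yield the same constant $2\sqrt{2SL\ln L}$.
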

\begin{proof}
See Appendix B in the supplementary material.
\end{proof}

For the arm-level reward loss $R_2(T)$, we notice that
\begin{equation}\label{eq:R2}
\begin{aligned}
R_2(T)=\sum_{s=1}^{S}\Bigg(&\sum_{t\in\mathcal{T}_s}r_{i_{\max},t}-\mathbb{E}_{\textrm{Arm-EXP3}(\mathcal{A}_{\ell_{\max}})}\left[\sum_{t\in\mathcal{T}_s}r_{i_t,t}\right]\Bigg).
\end{aligned}
\end{equation}
It suffices to upper bound each term in the summation, that is, the weak regret of conducting the arm-level EXP3 algorithm on group $\mathcal{A}_{\ell_{\max}}$ during each epoch $\mathcal{T}_s$. The regret bound has been shown in Lemma 3.

Theorem \ref{thm:weakregretUB} is then proved by applying Lemma \ref{lemma:groupregret} and Lemma~3 to $R_1(T)$ and $R_2(T)$, respectively.

\begin{proof}[Proof of Theorem 1]

Combining (\ref{eq:R1}) with Lemma \ref{lemma:groupregret}, and~(\ref{eq:R2}) with Lemma 3, we can derive that
\begin{equation}
\begin{aligned}
R_1(T)&\le 2\Delta\sqrt{2SL\ln L}=2\sqrt{2T\Delta L\ln L},\\
R_2(T)&\le 2S\sqrt{2\Delta N\ln N}=2\sqrt{\frac{2T^2}{\Delta}N\ln N}.
\end{aligned}
\end{equation}
By choosing $\Delta=\left\lceil\sqrt{\frac{TN\ln N}{L \ln L}}\right\rceil$, we obtain the upper bound in Theorem \ref{thm:weakregretUB}.
\end{proof}

It should be noted that although the proposed learning policy requires the knowledge of the total time length $T$ for choosing input parameters to achieve no-regret learning, the issue of unknown $T$ can be easily addressed by the doubling technique as used in the classic memory-unconstrained setting~\cite{auer2002nonstochastic}. Specifically, the algorithm operates in stages, with the stage length doubles at each time. In stage $r$ with length $2^r$, the algorithm operates under a known-horizon setting with the horizon length $T=2^r$. It is not difficult to show that the same regret order still holds.

\subsection{Minimizing Weak Regret with High Probability}\label{subsec:weakregretP}
We further show that by adopting EXP3.P at both group and arm levels in the HLMC framework with parameters $(\eta_1,\gamma_1,\beta_1)$ and $(\eta_2,\gamma_2,\beta_2)$ respectively, the weak regret of HLCM has a sublinear growth rate in $T$ with high probability.

\begin{theorem}\label{thm:weakregretUBP}
For any $T,K$ and every $\delta\in(0,1)$, if $\Delta=\left\lceil\sqrt{\frac{TN\ln(2KT/\delta)}{L\ln(2L/\delta)}}\right\rceil$ (where $N,L$ are defined in Sec. \ref{subsec:HLMC-2L}), and the EXP3.P algorithm is adopted at both the group level with $\beta_1=\sqrt{\frac{\ln(2L/\delta)}{LS}},\eta_1=0.95\sqrt{\frac{\ln L}{LS}}, \gamma_1=1.05\sqrt{\frac{L\ln L}{S}}$, and the arm level with $\beta_2=\sqrt{\frac{\ln(2KS/\delta)}{N\Delta}},\eta_2=0.95\sqrt{\frac{\ln N}{N\Delta}}, \gamma_2=1.05\sqrt{\frac{N\ln N}{\Delta}}$, then for any assignment of the reward sequence, the weak regret of HLCM is upper bounded by 
\begin{equation}
R_{\textrm{w}}(T)\le 12.5T^{\frac{3}{4}}K^{\frac{1}{4}}(\ln{(2KT/\delta)})^{\frac{1}{2}},
\end{equation}
with probability at least $1-\delta$.
\end{theorem}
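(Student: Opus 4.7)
The plan is to adapt the proof strategy of Theorem~\ref{thm:weakregretUB} from expectation to high-probability bounds, leveraging the fact that EXP3.P delivers weak-regret guarantees with high probability for any reward sequence in $[0,1]$. First I would decompose the random weak regret $R_{\textrm{w}}(T)$ analogously to~(\ref{eq:R}), but along sample paths. Let $\tilde{C}_{\ell_{\max}}$ denote the actual (random) cumulative reward that would be obtained if the arm-level EXP3.P were executed on $\mathcal{A}_{\ell_{\max}}$ during every epoch, using a fresh, independent random string at each epoch. Writing
\begin{equation}
R_{\textrm{w}}(T)=\underbrace{(\tilde{C}_{\ell_{\max}}-C_{\textrm{HLMC}})}_{R_1(T)}+\underbrace{(C_{\max}-\tilde{C}_{\ell_{\max}})}_{R_2(T)}
\end{equation}
splits the task into a group-level term $R_1(T)$ and an arm-level term $R_2(T)$, mirroring the expectation decomposition.

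For the arm-level term $R_2(T)$, I would invoke the single-instance high-probability weak-regret bound for EXP3.P applied to a horizon of $\Delta$ steps on $N$ arms. With the prescribed $(\eta_2,\gamma_2,\beta_2)$, a per-epoch confidence of $\delta/(2S)$ yields a per-epoch regret of order $\sqrt{N\Delta\ln(2NS/\delta)}$. A union bound over the $S$ epochs then produces $R_2(T)=O\!\bigl(S\sqrt{N\Delta\ln(2NS/\delta)}\bigr)$ with probability at least $1-\delta/2$. Since $NS\le KT$, the logarithmic factor can be absorbed into $\ln(2KT/\delta)$.

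For the group-level term $R_1(T)$, I would condition on all arm-level coin flips and the oblivious reward sequence. By pre-generating, for every pair $(\ell,s)$, the randomness the arm-level routine would consume if group $\mathcal{A}_\ell$ were selected at epoch $s$, the rewards $y_{\ell,s}\in[0,1]$ become a fixed adversarial sequence in the eyes of the group-level algorithm. Applying EXP3.P's high-probability bound with confidence $\delta/2$ to this reduced instance (with $L$ arms, horizon $S$, and parameters $(\eta_1,\gamma_1,\beta_1)$) gives
\begin{equation}
\max_{\ell}\sum_{s=1}^{S}y_{\ell,s}-\sum_{s=1}^{S}y_{\ell_s,s}=O\!\left(\sqrt{LS\ln(2L/\delta)}\right)
\end{equation}
conditionally, and hence marginally by the tower property. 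Since $\sum_s y_{\ell_{\max},s}\le\max_\ell\sum_s y_{\ell,s}$ and each super-step contributes a factor $\Delta$ to cumulative reward, $R_1(T)=O\!\bigl(\Delta\sqrt{LS\ln(2L/\delta)}\bigr)$. A final union bound over the two failure events gives overall probability at least $1-\delta$; substituting the prescribed $\Delta=\lceil\sqrt{TN\ln(2KT/\delta)/(L\ln(2L/\delta))}\rceil$ balances the two terms and, with $N,L=\Theta(\sqrt{K})$, yields the claimed $T^{3/4}K^{1/4}\sqrt{\ln(2KT/\delta)}$ rate after folding EXP3.P's constants into the $12.5$ prefactor.

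The step I expect to be the main obstacle is the probabilistic bookkeeping for $R_1(T)$: EXP3.P's bound is traditionally stated for a fixed reward sequence, whereas here the rewards $y_{\ell,s}$ are themselves realizations of a stochastic arm-level procedure. The key move is the pre-commitment to independent arm-level randomness across all hypothetical $(\ell,s)$ pairs, which makes the reward sequence fixed conditional on that randomness and lets EXP3.P's guarantee apply pathwise. Once this reduction is in place, the remainder is parameter matching and routine algebra; the $R_2(T)$ bound is a direct application of EXP3.P together with a simple union bound, and the $T$-dependence of the doubling trick can be imported verbatim from Section~\ref{subsec:weakregret} to handle unknown horizons.
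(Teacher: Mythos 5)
Your proposal is correct and follows essentially the same route as the paper: the same pathwise decomposition into an arm-level term (handled by applying EXP3.P's high-probability bound per epoch with confidence $\delta/(2S)$ and a union bound) and a group-level term (handled by conditioning on the pre-committed arm-level randomness so that $(y_{\ell,s})$ is a fixed oblivious sequence, then applying EXP3.P with confidence $\delta/2$), followed by a final union bound and the prescribed choice of $\Delta$. The only immaterial difference is that the paper allocates $\delta/(2LS)$ per arm-level instance (union over all groups and epochs) rather than your $\delta/(2S)$, which only changes the logarithmic factor by a constant.
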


Theorem \ref{thm:weakregretUBP} is proved via a similar structure with that used in analyzing the expected weak regret of HLMC in Sec.~\ref{subsec:weakregret}. Specifically, the weak regret is decomposed as:
\begin{eqnarray}\nonumber
R_{\textrm{w}}(T)&=&\sum_{s=1}^{S}\sum_{t\in\mathcal{T}_s}r_{i_{\max},t}-\sum_{s=1}^{S}|\mathcal{T}_s|y_{\ell_{\max},s}\\
&&+\sum_{s=1}^{S}|\mathcal{T}_s|y_{\ell_{\max},s}-\sum_{s=1}^{S}\sum_{t\in\mathcal{T}_s}r_{i_t,t}\\ \nonumber
&=& R_1(T) + R_2(T),
\end{eqnarray}
where $i_{\max}$ is the arm with the greatest cumulative reward in hindsight, $\ell_{\max}$ is the group index of $i_{\max}$, $y_{\ell_{\max},s}$ is the average reward obtained by running the arm-level EXP3.P algorithm on $\mathcal{A}_{\ell_{\max}}$ during epoch $s$, and $i_t$ is the arm selected by HLMC at time $t$. 

We first upper bound $R_1(T)$, which corresponds to the arm-level reward loss due to playing suboptimal arms in $\mathcal{A}_{\ell_{\max}}$ assuming that $\mathcal{A}_{\ell_{\max}}$ is selected at all epochs. It suffices to upper bound 
\begin{equation}\label{hwrR1}
\sum_{t\in\mathcal{T}_s}r_{i_{\max},t}-|\mathcal{T}_s|y_{\ell_{\max},s},
\end{equation}
for every $s$. It is clear that (\ref{hwrR1}) is equivalent to the weak regret of applying the arm-level EXP3.P algorithm to $\mathcal{A}_{\ell_{\max}}$ during epoch $\mathcal{T}_s$, which is upper bounded in Lemma 4.

To upper bound $R_2(T)$, which corresponds to the group-level reward loss due to not selecting $\mathcal{A}_{\ell_{\max}}$ at all epochs, we rewrite $R_2(T)$ as
\begin{equation}
R_2(T)=\Delta\left(\sum_{s=1}^{S}y_{\ell_{\max},s}-\sum_{s=1}^{S}y_{\ell_s,s}\right)
\end{equation}
where $\ell_s$ is the group selected by the group-level EXP3.P algorithm at epoch $s$ (we assume without loss of generality that every epoch has equal length $\Delta$).

As argued in Sec. \ref{subsec:weakregret}, the realization of $y_{\ell,s}$ is independent across $\ell,s$ and is independent of the past group selection history. Once we fixed a sequence of realizations of $((y_{1,s},...y_{L,s}))_{s=1}^{S}$, Lemma 4 can be applied to upper bound the group-level regret $R_2(T)$ with high probability.

\begin{proof}[Proof of Theorem \ref{thm:weakregretUBP}]
For every $\delta>0$ and every assignment of the reward sequence, we apply Lemma 4 to all groups $\ell=1,...,L$ and all epochs $s=1,...,S$ by choosing $\delta_0=\frac{\delta}{2LS}$. Then using the union bound, we obtain that with probability at least $1-\delta/2$, the upper bound on~(\ref{hwrR1}) in Lemma 4 holds for every groups $\ell$ and every epoch $s$. As a result, the arm-level regret $R_1(T)$ is upper bounded as:
\begin{eqnarray}\label{hupR1}\nonumber
R_1(T)&\le& 5.15S\sqrt{N\Delta\ln(2NLS/\delta)}\\
&=&5.15\sqrt{\frac{T^2}{\Delta}N\ln\left(\frac{2KS}{\delta}\right)},
\end{eqnarray}
with probability at least $1-\delta/2$. 

Moreover, we apply Lemma 4 again to the group-level selection strategy by choosing $\delta_0=\delta/2$. We obtain that with probability at least $1-\delta/2$,
\begin{eqnarray}\label{hupR2}\nonumber
R_2(T)&\le& 5.15\Delta\sqrt{LS\ln(2L/\delta)}\\
&=&5.15\sqrt{T\Delta L\ln(2L/\delta)},
\end{eqnarray}
for every realization of $((y_{1,s},...y_{L,s}))_{s=1}^{S}$.
The upper bound on $R_{\textrm{w}}(T)$ in Theorem \ref{thm:weakregretUBP} is obtained by choosing $\Delta=\left\lceil\sqrt{\frac{TN\ln(2KT/\delta)}{L\ln(2L/\delta)}}\right\rceil$ and combining (\ref{hupR1}) and~(\ref{hupR2}) using the union bound.
\end{proof}

\subsection{Minimizing Shifting Regret in Expectation}\label{subsec:shiftregret}
To achieve no-regret learning under a stronger regret notion: shifting regret, we consider applying EXP3.S at the group level of HLMC. At the arm-level, we still adopt the EXP3 algorithm for arm selection. It should be noted that the arm-level strategy in the HLMC framework is restarted at the beginning of every epoch, which guarantees quick elimination of the past experience. Therefore, the hierarchical structure automatically adapts to the variation of the benchmark sequence by relying more on recent observations.  In the following theorem, we provide an upper bound on the expected shifting regret of HLMC when EXP3.S and EXP3 are adopted at the group and the arm levels, respectively.

\begin{theorem}\label{thm:shiftregretUB}
For any $T,K,$ and $V$, assume that $T\ge VK$. If the input parameter $\Delta=\left\lceil\sqrt{\frac{TN\ln N}{VL\ln (TL)}}\right\rceil$ (where $N,L$ are defined in Sec. \ref{subsec:HLMC-2L}), adopting EXP3.S at the group level with $\gamma_1=\sqrt{\frac{VL\ln{(LS)}}{S}}$, $\alpha=1/S$, and EXP3 at the arm level with $\gamma_2=\sqrt{\frac{N\ln N}{2\Delta}}$ guarantees that, for every assignment of the reward sequence, the expected shifting regret of HLMC with a hardness constraint $V$ on the benchmark action sequence is upper bounded by:
\begin{align}
	\mathbb{E}_{\textrm{HLMC}}[R_{\textrm{s}}(T,V)]\le (6\sqrt{2}+1)T^{\frac{3}{4}}V^{\frac{1}{4}}K^{\frac{1}{4}}(\ln{(KT)})^{\frac{1}{2}}.
\end{align}
\end{theorem}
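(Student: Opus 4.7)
The plan is to adapt the two-part decomposition used for Theorem~\ref{thm:weakregretUB} to the shifting-benchmark setting. Let $a^*=(a^*_1,\ldots,a^*_T)$ be an optimal hardness-$V$ benchmark. For each epoch $s$, pick a fixed time $t^*_s\in\mathcal{T}_s$ (say the first) and define the super-arm $\ell^*_s$ as the arm group containing $a^*_{t^*_s}$. Because every switch in $(\ell^*_s)_{s=1}^{S}$ must be triggered by an underlying switch in $a^*$, the hardness of the super-arm benchmark satisfies $H((\ell^*_1,\ldots,\ell^*_S))\le V$. Reusing the per-step expected reward $x_{\ell,s}=\mathbb{E}_{\textrm{Arm-EXP3}(\mathcal{A}_\ell)}[\frac{1}{|\mathcal{T}_s|}\sum_{t\in\mathcal{T}_s}r_{i_t,t}]$ from Section~\ref{subsec:weakregret}, I would write $\mathbb{E}_{\textrm{HLMC}}[R_{\textrm{s}}(T,V)]=R_1(T)+R_2(T)$ with
\begin{align*}
R_1(T)&=\sum_{s=1}^{S}\Bigl(\sum_{t\in\mathcal{T}_s}r_{a^*_t,t}-|\mathcal{T}_s|\,x_{\ell^*_s,s}\Bigr),\\
R_2(T)&=\sum_{s=1}^{S}|\mathcal{T}_s|\,x_{\ell^*_s,s}-\mathbb{E}_{\textrm{HLMC}}\Bigl[\sum_{t=1}^{T}r_{i_t,t}\Bigr].
\end{align*}

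For the group-level loss $R_2(T)$, the conditioning argument used in Section~\ref{subsec:weakregret} still applies, giving $R_2(T)=\Delta\bigl(\sum_s x_{\ell^*_s,s}-\mathbb{E}_{\textrm{Group-EXP3.S}}[\sum_s x_{\ell_s,s}]\bigr)$. Since the noisy super-arm rewards $y_{\ell,s}$ remain independent across $(\ell,s)$ with mean $x_{\ell,s}$ (as argued in Sec.~\ref{subsec:weakregret}), this is precisely the expected shifting regret of EXP3.S against a benchmark of hardness at most $V$ on the reduced super-arm problem. Invoking the standard EXP3.S shifting regret bound from Appendix~A with the prescribed $\gamma_1=\sqrt{VL\ln(LS)/S}$ and $\alpha=1/S$ yields $R_2(T)=O(\sqrt{T\Delta LV\ln(LS)})$.

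For the arm-level loss $R_1(T)$, I would split the epochs into \emph{clean} ones (on which $a^*_t$ is constant throughout $\mathcal{T}_s$) and \emph{transition} ones. Because $a^*$ has at most $V-1$ switches, at most $V-1$ epochs are transition. In a clean epoch, letting $\bar{a}_s\in\mathcal{A}_{\ell^*_s}$ denote the unique benchmark arm, the per-epoch contribution equals the expected weak regret of arm-level EXP3 on $\mathcal{A}_{\ell^*_s}$ against $\bar{a}_s$, bounded by $2\sqrt{2\Delta N\ln N}$ via Lemma~3. A transition epoch contributes at most $\Delta$ trivially, since rewards lie in $[0,1]$. Summing gives $R_1(T)\le 2\sqrt{2T^2N\ln N/\Delta}+V\Delta$. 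Substituting $\Delta=\lceil\sqrt{TN\ln N/(VL\ln(TL))}\rceil$ balances the two dominant terms $\sqrt{T^2N\ln N/\Delta}$ and $\sqrt{T\Delta LV\ln(LS)}$ at $\Theta(T^{3/4}V^{1/4}K^{1/4}\sqrt{\ln(KT)})$, while the hypothesis $T\ge VK$ ensures that the residual $V\Delta$ term is absorbed into that same order.

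The main obstacle I anticipate is the bookkeeping for $R_1(T)$ on transition epochs: during such an epoch the arm-level EXP3 is run on a group $\mathcal{A}_{\ell^*_s}$ that may not contain every arm visited by $a^*$ on $\mathcal{T}_s$, so the argument must rely purely on reward boundedness to cap the per-epoch loss by $\Delta$ rather than by a within-group regret bound. A second delicate point is that $\ell^*_s$ must be read off from a \emph{single} time instant in $\mathcal{T}_s$ so that $H((\ell^*_s))\le V$ holds exactly---an epoch-aggregated choice such as ``majority group'' could in principle inflate the super-arm hardness. Once these are handled, the rest is a direct combination of Lemma~3, the EXP3.S shifting regret bound from Appendix~A, and the algebraic optimization over~$\Delta$ that recovers the stated $(6\sqrt{2}+1)$ prefactor.
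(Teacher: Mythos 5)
Your proposal is correct and follows essentially the same route as the paper: the paper explicitly constructs an epoch-constant surrogate benchmark $b^T$ (paying $V\Delta$ for the epochs containing switches), then applies Lemma~3 per epoch for the arm level and the group-level EXP3.S lemma (Lemma~\ref{lemma:groupregret-S}) for the super-arm problem, and balances via the same choice of $\Delta$ with $T\ge VK$ absorbing the residual. Your clean/transition epoch split is just a repackaging of that same $V\Delta$ correction, and your choice of $\ell^*_s$ from a single time instant correctly preserves $H\le V$, exactly as the paper's $h_v$ does.
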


\begin{corollary}
If $V=o(T)$ as $T\to\infty$, the HLMC algorithm achieves no-regret learning in expectation under the notion of shifting regret with hardness constraint $V$.
\end{corollary}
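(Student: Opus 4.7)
The plan is to deduce the corollary directly from the explicit upper bound already established in Theorem~\ref{thm:shiftregretUB}, so no new algorithmic machinery is needed. Since the no-regret condition (as defined in Section~II) fixes $K$ and lets $T \to \infty$, I would first absorb the numerical constant $(6\sqrt{2}+1)$ and the factor $K^{1/4}$ into a single $K$-dependent constant, rewriting the bound as
\begin{equation*}
\mathbb{E}_{\textrm{HLMC}}[R_{\textrm{s}}(T,V)] \;\le\; c_K\, T^{3/4}\, V^{1/4}\, \bigl(\ln(KT)\bigr)^{1/2}.
\end{equation*}

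Next, I would rewrite the polynomial part by factoring out $T$: $T^{3/4} V^{1/4} = T\,(V/T)^{1/4}$. Under the hypothesis $V = o(T)$, we have $V/T \to 0$, so $(V/T)^{1/4} \to 0$. The remaining subpolynomial factor $(\ln(KT))^{1/2}$ grows more slowly than any positive power of $T$, so the product $(V/T)^{1/4}\,(\ln(KT))^{1/2}$ vanishes as $T \to \infty$ whenever $V/T$ decays at least inverse-polylogarithmically fast. This is the standard interpretation of ``sublinear hardness'' (and is satisfied, e.g., by any $V = O(T^{1-\delta})$ for fixed $\delta \in (0,1)$), giving $\mathbb{E}_{\textrm{HLMC}}[R_{\textrm{s}}(T,V)] = o(T)$, which by the definition in Section~II is precisely no-regret learning in expectation under shifting regret with hardness constraint $V$.

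The main ``obstacle,'' if one can call it that, is purely bookkeeping: ensuring that the product of the polynomial decay $(V/T)^{1/4}$ and the polylogarithmic growth $(\ln(KT))^{1/2}$ tends to zero. Under the stated condition $V = o(T)$ in the natural sublinear regime this is immediate; a fully rigorous quantitative version can be stated by requiring, e.g., $V\,(\ln T)^{2} = o(T)$, which handles the borderline case in which $V/T$ decays only at a rate comparable to $1/\ln T$. No further technical work, and in particular no reopening of the regret decomposition of Theorem~\ref{thm:shiftregretUB}, is required.
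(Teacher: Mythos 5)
Your proposal is correct and follows exactly the route the paper intends: the corollary is meant to be read off directly from the bound in Theorem~\ref{thm:shiftregretUB} by writing $T^{3/4}V^{1/4}=T\,(V/T)^{1/4}$, and the paper offers no further argument. Your added observation is in fact a legitimate refinement of the statement rather than an artifact of your proof: because of the $(\ln(KT))^{1/2}$ factor, the hypothesis $V=o(T)$ alone does not force the bound of Theorem~\ref{thm:shiftregretUB} to be $o(T)$ (e.g.\ $V=T/\ln T$ gives a bound of order $T(\ln T)^{1/4}$), and the condition that actually makes the bound sublinear is $V\bigl(\ln(KT)\bigr)^{2}=o(T)$, exactly as you state. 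Since any $V=O(T^{1-\delta})$ with fixed $\delta>0$ satisfies this, the corollary holds in every regime the paper cares about, but your quantitative version $V(\ln T)^{2}=o(T)$ is the honest hypothesis and would be a worthwhile correction to the corollary as written.
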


To upper bound the expected shifting regret of HLMC against an arbitrary benchmark action sequence~$a^T$ with a hardness constraint $V$, the key technique is to construct an alternative benchmark sequence $b^T$ such that: (i)~$H(b^T)\le V$, (ii) the cumulative reward achieved by~$b^T$ is close to that achieved by $a^T$, and (iii) the actions specified by $b^T$ are invariant within each epoch. Using such a sequence $b^T$, it suffices to show that the expected shifting regret of {HLMC} against $b^T$ has a sublinear growth rate in $T$.

We follow the same proof structure with that used for analyzing the expected weak regret in Sec. \ref{subsec:weakregret}. First note that the constructed sequence $b^T$ is time-invariant within each epoch. Therefore, the arm-level regret analysis in Lemma 3 directly carries over. At the group-level, the reduction to a memory-unconstrained adversarial bandit problem with noisy observations is still legitimate since the group specified by the benchmark sequence is fixed within each epoch. Based on the reduction and Lemma 5, we obtain the following result on applying the EXP3.S algorithm to the group level.

\begin{lemma}\label{lemma:groupregret-S}
By choosing $\gamma_1=\sqrt{\frac{LV\ln (LS)}{S}}$ and $\alpha=1/S$, the group-level EXP3.S algorithm guarantees that, for every assignment of the reward sequence $((r_{1,t},...,r_{K,t}))_{t=1}^{T}$ and every benchmark sequence of arm groups $h^S=(h_1,...,h_S)$ where $H(h^S)\le V$,
\begin{equation}
\begin{aligned}
\sum_{s=1}^{S}x_{h_s,s}-&\mathbb{E}_{\textrm{Group-EXP3.S}}\left[\sum_{s=1}^{S}x_{\ell_s,s}\right]\le 4\sqrt{VLS\ln (LS)},
\end{aligned}
\end{equation}
where $\ell_s$ is the arm group selected at epoch $s$.
\end{lemma}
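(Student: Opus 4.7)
The plan is to reuse the reduction exploited in the proof of Lemma~\ref{lemma:groupregret}: the group-level decision process is recast as a memory-unconstrained adversarial bandit with $L$ arms, $S$ rounds, and bounded stochastic feedback $y_{\ell,s}\in[0,1]$. Because the arm-level EXP3 routine is reset at the start of every epoch, the random variable $y_{\ell,s}$ is drawn independently from a distribution whose mean $x_{\ell,s}$ is deterministic once the adversarial sequence $((r_{i,t}))_{i,t}$ is fixed, and this distribution does not depend on past group choices or past observations. The benchmark sequence $h^S$ specifies a single group per epoch, so its hardness in the reduced problem matches the hardness constraint $V$ imposed on the original benchmark.

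With this reduction in hand, I would invoke Lemma~5, the shifting-regret bound for the unconstrained EXP3.S algorithm, on the reduced bandit with the prescribed choices $\gamma_1=\sqrt{VL\ln(LS)/S}$ and $\alpha=1/S$. Fixing any realization of the full reward table $((y_{\ell,s}))_{\ell,s}$ and any benchmark $h^S$ with $H(h^S)\le V$, Lemma~5 delivers
\begin{equation*}
\sum_{s=1}^{S} y_{h_s,s} - \mathbb{E}\left[\sum_{s=1}^{S} y_{\ell_s,s}\right] \le 4\sqrt{VLS\ln(LS)},
\end{equation*}
where the inner expectation is over the internal randomness of EXP3.S given the fixed reward table. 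Taking the outer expectation over the arm-level randomness converts the first sum into $\sum_{s} x_{h_s,s}$ via $\mathbb{E}[y_{h_s,s}]=x_{h_s,s}$. For the second sum, $\ell_s$ is measurable with respect to the history strictly before epoch $s$, so conditioning on $\ell_s$ and invoking the per-epoch restart gives $\mathbb{E}[y_{\ell_s,s}\mid \ell_s]=x_{\ell_s,s}$, whence $\mathbb{E}[\sum_{s} y_{\ell_s,s}]=\mathbb{E}_{\textrm{Group-EXP3.S}}[\sum_{s} x_{\ell_s,s}]$, which yields the claim.

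The only delicate step is the conditional-independence bookkeeping that legitimizes the expectation swap, but this follows from the mutual independence of the arm-level routines across epochs and is handled identically to the proof of Lemma~\ref{lemma:groupregret} given in Appendix~B. The remaining work is a black-box appeal to Lemma~5; the main (and rather minor) obstacle I anticipate is verifying that the parameter choices $\gamma_1$ and $\alpha$ in the statement are exactly the ones for which Lemma~5 achieves the stated constant $4$ multiplying $\sqrt{VLS\ln(LS)}$, with the $\ln(LS)$ factor arising from the $\alpha=1/S$ choice as is standard in the EXP3.S analysis of Auer et al.
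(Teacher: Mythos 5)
Your proof is correct, but it is architected differently from the paper's. The paper does not invoke Lemma~\ref{lemma:shiftregret} as a black box: Appendix~C re-derives the entire EXP3.S potential-function argument from scratch at the group level (bounding $\ln(G_{s+1}/G_s)$, lower-bounding the weight of the benchmark group on each of the $V$ constant segments, handling the $e\alpha$ share term), and only converts the importance-weighted estimators $\hat{y}_{\ell,s}$ into the means $x_{\ell,s}$ at the very end by taking conditional expectations inside the algebra, exactly as in Appendix~B. You instead condition on a realization of the full (virtual) reward table $((y_{\ell,s}))_{\ell,s}$, apply Lemma~\ref{lemma:shiftregret} under the translation $K\to L$, $T\to S$ (which does reproduce the stated $\gamma_1$, $\alpha$, and the constant $4$), and then swap expectations. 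Your route is shorter and more modular, but its validity rests on a coupling you should make explicit: in the actual algorithm only the selected group's reward $y_{\ell_s,s}$ is ever generated, so "fixing the full table" requires instantiating all $L\times S$ independent arm-level runs and arguing that the observed process has the same law — this is what licenses treating the group-level problem as facing an oblivious adversary conditionally on the table, and what makes $\mathbb{E}[y_{\ell_s,s}\mid\mathcal{F}_{s-1}]=x_{\ell_s,s}$ hold despite $\ell_s$ depending on past table entries. The paper's from-scratch derivation avoids this coupling entirely (at the cost of repeating the EXP3.S analysis) because the expectation is taken at the precise step where $\sum_{\ell}q_{\ell,s}\hat{y}_{\ell,s}=y_{\ell_s,s}$ appears; it also surfaces an extra $eL/\gamma_1$ term from the weight-sharing step that is absorbed into the constant $4$ only under the mild assumption $V\ln(LS)\ge e$ — a detail your black-box appeal inherits silently from Lemma~\ref{lemma:shiftregret}. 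Note also that your final conditioning argument is not literally "identical" to Appendix~B, which never forms the full table; but the independence structure you cite is the correct one, and your argument goes through.
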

\begin{proof}
See Appendix C in the supplementary material.
\end{proof}

The upper bound in Theorem \ref{thm:shiftregretUB} on the expected shifting regret of HLMC against any arbitrary benchmark action sequence with a hardness upper bound $V$ is obtained by combining Lemma 3 and Lemma \ref{lemma:groupregret-S} together.

\begin{proof}[Proof of Theorem \ref{thm:shiftregretUB}]
For an arbitrary benchmark action sequence $a^T$ such that $H(a^T)\le V$, we first construct an alternative benchmark sequence~$b^T$ as follows: suppose the time horizon is partitioned into~$V$ segments:
\begin{align}
[T_1, T_2), [T_2,T_3),...,[T_{V},T_{V+1}),
\end{align}
where $T_1=1,T_{V+1}=T+1$, and $a_t$ is fixed for all $t\in[T_v,T_{v+1})$ (let $j_v$ denote that arm and $h_v$ denote the group it belongs to). Suppose $T_v$ belongs to epoch $s_v$. The alternative benchmark sequence $b^T$ is defined as
\begin{align}
b_t=j_v,~\textrm{if}~s(t)\in[s_v,s_{v+1}),
\end{align}
where $s(t)$ is the epoch to which time $t$ belongs. 

One can check that the action specified by $b^T$ is fixed within each epoch and $H(b^T)\le V$. Moreover, $b^T$ differs from $a^T$ only in the epochs when an action switch happens in $a^T$, i.e., $\{s_v\}_{v=1}^{V}$. Therefore,
\begin{align}
\sum_{t=1}^{T}\left(r_{a_t,t}-r_{b_t,t}\right)\le V\Delta.
\end{align} 
We decompose the expected shifting regret against $a^T$ as:
\begin{equation}
\begin{aligned}
&\mathbb{E}_{\textrm{HLMC}}[R_{a^T}(T)]\\
=&\sum_{t=1}^{T}\left(r_{a_t,t}-r_{b_t,t}\right)+\left(\sum_{t=1}^{T}r_{b_t,t}-\sum_{v=1}^{V}\sum_{s=s_v}^{s_{v+1}-1}|\mathcal{T}_s|x_{h_v,s}\right)\\
&+\Bigg(\sum_{v=1}^{V}\sum_{s=s_v}^{s_{v+1}-1}|\mathcal{T}_s|x_{h_v,s}-\mathbb{E}_{\textrm{HLMC}}\left[\sum_{t=1}^{T}r_{i_t,t}\right]\Bigg)\\
=&R_1(T)+R_2(T)+R_3(T).
\end{aligned}
\end{equation}
Note that $R_1(T)\le V\Delta$. For $R_2(T)$, we have
\begin{equation}
\begin{aligned}
R_2(T)&=\sum_{v=1}^{V}\sum_{s=s_v}^{s_{v+1}-1}\sum_{t\in\mathcal{T}_s}r_{b_t,t}-\sum_{v=1}^{V}\sum_{s=s_v}^{s_{v+1}-1}|\mathcal{T}_s|x_{h_v,s}\\
&\le 2S\sqrt{2\Delta N\ln N},
\end{aligned}
\end{equation}
where the last inequality uses Lemma 3.

For $R_3(T)$, we can show that
\begin{equation}
\begin{aligned}
R_3(T)&= \sum_{v=1}^{V}\sum_{s=s_v}^{s_{v+1}-1}|\mathcal{T}_s|x_{h_v,s}-\mathbb{E}_{\textrm{Group-EXP3.S}}\left[\sum_{s=1}^{S}\Delta x_{\ell_s,s}\right]\\
&\le 4\Delta\sqrt{VLS\ln(LS)},
\end{aligned}
\end{equation}
where the last inequality uses Lemma \ref{lemma:groupregret-S}.

Combining the above inequalities together and choosing $\Delta=\left\lceil\sqrt{\frac{TN\ln N}{VL\ln(TL)}}\right\rceil$, we can derive that
\begin{equation}\label{eq:shift_upper}
\mathbb{E}_{\textrm{HLMC}}[R_{a^T}(T)]\le 6\sqrt{2}T^{\frac{3}{4}}V^{\frac{1}{4}}K^{\frac{1}{4}}(\ln(KT))^{\frac{1}{2}}+\sqrt{TVK\ln K}.
\end{equation}
Notice that if $T\ge VK$, the first term on the RHS of (\ref{eq:shift_upper}) dominates. Since $a^T$ is chosen arbitrarily with a hardness upper bounded $V$, we obtain the conclusion in Theorem 2.
\end{proof}

It should be noted that to achieve the upper bound established in Theorem \ref{thm:shiftregretUB}, the knowledge of $V$ is required in selecting input parameters. When $V$ is unknown, we show in the following theorem that no-regret learning under shifting regret can still be achieved by HLMC in expectation under certain conditions.
\begin{theorem}
By selecting $\Delta=\left\lceil\sqrt{\frac{TN\ln N}{L\ln (TL)}}\right\rceil$ and $\gamma_1=\sqrt{\frac{L\ln(LS)}{S}}$ (the other parameters are identical to those specified in Theorem \ref{thm:shiftregretUB}), the expected shifting regret of HLMC with a hardness constraint $V$ on the benchmark action sequence is upper bounded by:
\begin{equation}
	\mathbb{E}_{\textrm{HLMC}}[R_{\textrm{s}}(T,V)]\le \sqrt{2}(V+5)T^{\frac{3}{4}}K^{\frac{1}{4}}(\ln{(KT)})^{\frac{1}{2}}.
\end{equation}
If $V= o(T^{1/4})$ as $T\to\infty$, no-regret learning is achieved by HLMC in expectation under shifting regret with a hardness constraint $V$, even if $V$ is unknown.
\end{theorem}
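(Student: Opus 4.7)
The plan is to reuse the entire proof architecture of Theorem 3 without modification except at one point: the group-level regret bound must be re-derived with a learning rate $\gamma_1$ that does not depend on $V$. First, for any benchmark action sequence $a^T$ with $H(a^T)\le V$, I construct the alternative benchmark $b^T$ exactly as in the proof of Theorem 3 so that $H(b^T)\le V$, $b^T$ is constant within each epoch, and $\sum_{t=1}^T(r_{a_t,t}-r_{b_t,t})\le V\Delta$. This gives the same decomposition $\mathbb{E}_{\textrm{HLMC}}[R_{a^T}(T)]=R_1(T)+R_2(T)+R_3(T)$ with $R_1(T)\le V\Delta$ unchanged.

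Next, since the arm-level routine EXP3 and its tuning $\gamma_2=\sqrt{N\ln N/(2\Delta)}$ are identical to those in Theorem 3, Lemma 3 applies inside each of the $S$ epochs verbatim and yields $R_2(T)\le 2S\sqrt{2\Delta N\ln N}$ without any reference to $V$. The reduction to a memory-unconstrained bandit problem at the group level (with noisy observations $y_{\ell,s}$ whose means are $x_{\ell,s}$) is also unaffected, because $b^T$ specifies a group that is fixed within each epoch.

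The key step, and the main obstacle, is to replace Lemma 5 with its \emph{untuned} analogue. In the derivation of Lemma 5 one starts from the generic EXP3.S guarantee whose leading error terms are of the form $c_1 V\ln(LS)/\gamma_1+c_2\gamma_1 S$ (plus the standard $\alpha$-mixing correction absorbed by $\alpha=1/S$); choosing $\gamma_1=\sqrt{VL\ln(LS)/S}$ balances these two terms and produces a $\sqrt{V}$ dependence. Here, with $\gamma_1=\sqrt{L\ln(LS)/S}$, the first term is no longer balanced and contributes linearly in $V$, while the second term remains $O(\sqrt{LS\ln(LS)})$. Carrying this calculation through should give the untuned group-level bound
\begin{align*}
\sum_{s=1}^{S} x_{h_s,s}-\mathbb{E}_{\textrm{Group-EXP3.S}}\!\left[\sum_{s=1}^{S}x_{\ell_s,s}\right]\le C\,(V+1)\sqrt{LS\ln(LS)}
\end{align*}
for an explicit constant $C$, so that $R_3(T)\le C(V+1)\Delta\sqrt{LS\ln(LS)}$. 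This is the only place where the loss of the $\sqrt{V}$ improvement enters.

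Finally, I substitute $\Delta=\lceil\sqrt{TN\ln N/(L\ln(TL))}\rceil$, use $S=\lceil T/\Delta\rceil$, $N=\lceil\sqrt{K}\rceil$, $L=\lceil K/N\rceil$, and combine the three bounds. The $R_1$ contribution gives $V\Delta=O(V\sqrt{T}K^{1/4}\ln^{1/2}(KT))$, while $R_2$ and $R_3$ each produce terms of order $T^{3/4}K^{1/4}\ln^{1/2}(KT)$ (with the $R_3$ term carrying the extra $V+1$ factor from the untuned bound). Collecting constants and using $T\ge VK$ so that the $V\Delta$ term is absorbed into the dominant $V\cdot T^{3/4}K^{1/4}\ln^{1/2}(KT)$ contribution yields the stated bound $\sqrt{2}(V+5)T^{3/4}K^{1/4}\ln^{1/2}(KT)$. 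The corollary $V=o(T^{1/4})\Rightarrow R_{\textrm{s}}(T,V)=o(T)$ then follows immediately.
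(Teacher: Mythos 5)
Your proposal is correct and follows essentially the same route as the paper, which itself only states that the proof ``is similar to that of Theorem 3'' and omits the details: you reuse the $b^T$ construction and the three-term decomposition verbatim, and the only substantive new step --- re-deriving the group-level EXP3.S bound from the pre-tuning inequality $\gamma_1 S + LV\ln(LS)/\gamma_1 + \gamma_1 S + eL/\gamma_1$ with the $V$-free choice $\gamma_1=\sqrt{L\ln(LS)/S}$, which leaves the $V$-dependent term unbalanced and yields an $O((V+1)\sqrt{LS\ln(LS)})$ bound --- is exactly what the omitted argument requires. (Your appeal to $T\ge VK$ is unnecessary here, since $R_1(T)\le V\Delta=O(V\sqrt{T})$ is already dominated by the $VT^{3/4}K^{1/4}$ term, but this does not affect correctness.)
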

\begin{proof}
The proof is similar to that of Theorem \ref{thm:shiftregretUB} and thus, we omit the details.
\end{proof}

\section{Memory Complexity and Regret Performance in General Cases}\label{sec:multi}
As discussed in Sec. \ref{sec:HLMC}, HLMC achieves different operating points on the tradeoff curve between the regret order and memory complexity through selecting different depth $D$ of the adopted hierarchy. To show this, we provide performance analysis of HLMC in the general case with $D\ge 2$.  For simplicity, we present detailed analysis for the case with~$D=3$. All claims and results can be easily generalized to cases with more than three levels.

We first introduce some notations and specify some parameters used in the algorithm as well as the analysis. The three levels in the hierarchy are referred to as the group, subgroup, and arm levels, respectively. In the first level, the arm set~$\mathcal{A}$ is evenly partitioned into $N_1=\lceil K^{1/3}\rceil$ groups $\{\mathcal{A}_{\ell}\}_{\ell=1}^{N_1}$. Within each group~$\mathcal{A}_{\ell}$, arms are further evenly partitioned into $N_2=\lceil K^{1/3}\rceil$ subgroups $\{\mathcal{B}_{h}^{\ell}\}_{h=1}^{N_2}$ in the second level. In the last level, each subgroup~$\mathcal{B}_{h}^{\ell}$ consists of~$N_3=\lceil\frac{K}{N_1N_2}\rceil$ arms (the size of the last subgroup within each group may be smaller than $N_3$). We assume without losing generality that the size of every group (subgroup) is identical. Similarly, the time horizon~$\mathcal{T}$ is evenly partitioned into $S_1$ epochs~$\{\mathcal{T}_s\}_{s=1}^{S_1}$ and every epoch $\mathcal{T}_s$ is evenly partitioned into $S_2$ subepochs~$\{\mathcal{I}_{\tau}^{s}\}_{\tau=1}^{S_1}$. We assume that every sub-epoch consists of $S_3$ time steps ($S_1,S_2,S_3$ will be specified later). It is clear that $T=S_1S_2S_3$.

The HLMC framework consists of three selection strategies at the group, subgroup, and arm levels. At the beginning of every epoch $\mathcal{T}_s$, the group-level strategy selects a group~$\mathcal{A}_{\ell_s}$. The statistics of all sub-groups within $\mathcal{A}_{\ell}$ are stored in the memory until the end of $\mathcal{T}_s$. During $\mathcal{T}_s$, the subgroup-level strategy selects a subgroup $\mathcal{B}_{h_\tau}^{\ell_s}$ at the beginning of every subepoch $\mathcal{I}_{\tau}^{s}$ and the statistics of arms within $\mathcal{B}_{h_\tau}^{\ell_s}$ are stored in the memory until the end of $\mathcal{I}_{\tau}^{s}$. The arm-level strategy is conducted on the selected subgroup to play arms at every time step during the corresponding subepoch.

It is clear that the size of the memory space required by HLMC with a three-level hierarchy is $N_1+N_2+N_3$. Therefore, the memory complexity of HLMC is in the order of $\Theta(K^{1/3})$. More generally, if we adopt a $D$-level hierarchy where each level $d$ $(d=1,2,...,D)$ consists of $N_d=\lceil K^{1/D}\rceil$ level-$d$ groups, the memory complexity of HLMC is of order~$\Theta(DK^{1/D})$. It should be noted that a level-$d$ group should contain at least $2$ level-$(d+1)$ groups. As a result, the depth~$D$ is upper bounded by $\lceil\log_2 K\rceil$ and the minimum memory complexity of the HLMC framework is of order $\Theta(\log_2K)$.

We show that HLMC with a three-level hierarchy achieves no-regret learning in expectation under the notion of weak regret, if we adopt EXP3 at all three levels. Using a similar approach with that in analyzing the regret performance in the two-level case, we prove an upper bound on the expected weak regret of HLMC in the following theorem.

\begin{theorem}\label{thm:weakregretML}
For any $T$ and $K$, by choosing $S_i=\left\lceil\frac{T^{1/3}(N_i\ln N_i)^{2/3}}{(\prod_{j\neq i}N_j\ln N_j)^{1/3}}\right\rceil$ and applying EXP3 with parameter $\gamma_i=\sqrt{\frac{N_i\ln N_i}{2S_i}}$ at every level $i=1,2,3$, the expected weak regret of HLMC with a three-level hierarchy against every assignment of the reward sequence is upper bounded~by
\begin{equation}
\mathbb{E}_{\textrm{HLMC}}[R_{\textrm{w}}(T)]\le 12T^{5/6}K^{1/6}(\ln K)^{1/2}.
\end{equation}
\end{theorem}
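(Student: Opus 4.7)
My plan is to mimic the two-level argument of Section~\ref{subsec:weakregret}, adding one more layer of telescoping. Let $i_{\max}$ be the best arm in hindsight, $\mathcal{B}^{\ell_{\max}}_{h_{\max}}$ the subgroup containing it, and $\mathcal{A}_{\ell_{\max}}$ the group containing that subgroup. I would introduce two intermediate quantities: let $C'_{\max}$ denote the expected cumulative reward obtained by restarting the arm-level EXP3 routine on $\mathcal{B}^{\ell_{\max}}_{h_{\max}}$ in every subepoch across the horizon, and let $C''_{\max}$ denote the expected cumulative reward obtained by running the subgroup-level EXP3 (with arm-level EXP3 nested inside) on $\mathcal{A}_{\ell_{\max}}$ across every epoch. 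Writing $C_{\max}=\sum_{t}r_{i_{\max},t}$ and $C_{\textrm{HLMC}}=\mathbb{E}_{\textrm{HLMC}}[\sum_{t}r_{i_t,t}]$, the expected weak regret telescopes as
\begin{equation*}
\mathbb{E}_{\textrm{HLMC}}[R_{\textrm{w}}(T)]=\underbrace{(C_{\max}-C'_{\max})}_{R_3(T)}+\underbrace{(C'_{\max}-C''_{\max})}_{R_2(T)}+\underbrace{(C''_{\max}-C_{\textrm{HLMC}})}_{R_1(T)},
\end{equation*}
where $R_3(T)$, $R_2(T)$, and $R_1(T)$ are the arm-level, subgroup-level, and group-level reward losses, respectively.

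I would then bound each term by reducing it to a memory-unconstrained adversarial bandit, mirroring the reduction that proved Lemma~\ref{lemma:groupregret}. For $R_3(T)$: arm-level EXP3 is restarted in each of the $S_1 S_2$ subepochs and runs for $S_3$ steps on $N_3$ arms, so applying the standard EXP3 weak-regret bound (Lemma~3) per subepoch and summing gives $R_3(T)\le 2 S_1 S_2\sqrt{2 S_3 N_3\ln N_3}$. For $R_2(T)$: within each fixed epoch, the subgroup-level EXP3 is a memory-unconstrained adversarial bandit over $S_2$ super-steps and $N_2$ super-arms whose noisy super-step rewards (the subepoch averages produced by the nested arm-level EXP3) are independent across super-steps and of past subgroup selections---the same conditional-independence argument used in Section~\ref{subsec:weakregret}. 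Lemma~3 applied per epoch, scaled by the super-step length $S_3$ and summed over $S_1$ epochs, yields $R_2(T)\le 2 S_1 S_3\sqrt{2 S_2 N_2\ln N_2}$. For $R_1(T)$: the analogous reduction treats each epoch as a super-step of length $S_2 S_3$ with a noisy super-arm reward, giving $R_1(T)\le 2 S_2 S_3\sqrt{2 S_1 N_1\ln N_1}$.

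Summing the three bounds and using $T=S_1 S_2 S_3$ yields
\begin{equation*}
\mathbb{E}_{\textrm{HLMC}}[R_{\textrm{w}}(T)]\le 2\sqrt{2}\,T\sum_{i=1}^{3}\sqrt{\frac{N_i\ln N_i}{S_i}}.
\end{equation*}
A Lagrange-multiplier calculation on the right-hand side, subject to $S_1 S_2 S_3=T$, shows the optimum is attained when $S_i\propto N_i\ln N_i$, which is exactly the prescribed $S_i=\lceil T^{1/3}(N_i\ln N_i)^{2/3}/(\prod_{j\neq i}N_j\ln N_j)^{1/3}\rceil$. Substituting this choice, using $N_i=\lceil K^{1/3}\rceil$ and $\ln N_i\le (1/3)\ln K$, and absorbing the ceiling overhead into the leading constant, collapses the bound to the claimed $12\,T^{5/6}K^{1/6}(\ln K)^{1/2}$.

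\textbf{Main obstacle.} The only nontrivial step is verifying the nested reductions: after fixing the adversary's reward table, the noisy super-step reward fed to each level must be independent across super-steps and of the selection history at higher levels so that Lemma~3 applies verbatim at each level. The key observation is that the arm-level and subgroup-level routines are \emph{restarted} at the start of each subepoch and each epoch, which severs the dependence on past selections and permits the two successive reductions to be composed. Once this is in place, the remaining work is a routine constrained optimization over $(S_1,S_2,S_3)$.
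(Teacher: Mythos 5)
Your proposal is correct and follows essentially the same route as the paper's proof in Appendix D: the same telescoping decomposition through the intermediate quantities $C'_{\max}$ and $C''_{\max}$, the same three per-level bounds $2S_1S_2\sqrt{2S_3N_3\ln N_3}$, $2S_1S_3\sqrt{2S_2N_2\ln N_2}$, and $2S_2S_3\sqrt{2S_1N_1\ln N_1}$ obtained from the EXP3 bound and the noisy-observation reduction of Lemma~\ref{lemma:groupregret}, and the same balancing choice of $(S_1,S_2,S_3)$ (the paper merely swaps your labels $R_1$ and $R_3$ and leaves the Lagrange computation implicit). You also correctly identify the restart-at-each-(sub)epoch property as what makes the nested reductions legitimate, which is the same justification the paper relies on.
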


\begin{proof}
See Appendix D in the supplementary material.
\end{proof}

For general HLMC with a $D$-level hierarchy ($2\le D\le \lceil\log_2 K\rceil$), the following corollary on the expected weak regret can be directly derived.
\begin{corollary}\label{cor:weakregretML}
If EXP3 is applied to all $D$ levels of the general HLMC framework, the expected weak regret is of order
\begin{equation}
O(DT^{1-\frac{1}{2D}}K^{\frac{1}{2D}})
\end{equation}
up to a logarithmic factor, as $T\to\infty$.
\end{corollary}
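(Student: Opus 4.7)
The plan is to extend the 2-level analysis in Sec.~\ref{subsec:weakregret} and the 3-level case of Theorem~\ref{thm:weakregretML} to general depth $D$ via a telescoping decomposition across the $D$ levels of the hierarchy. For a fixed reward sequence, let $i_{\max}$ be the best arm in hindsight and let $(\ell_1^*, \ldots, \ell_D^*)$ be the path through the tree from the root to $i_{\max}$. For each $d \in \{0,1,\ldots,D\}$, let $C_d$ denote the expected cumulative reward of the hybrid policy that chooses $\ell_1^*, \ldots, \ell_d^*$ deterministically at the top $d$ levels and runs EXP3 exactly as HLMC does at the remaining $D-d$ levels. Then $C_0 = C_{\textrm{HLMC}}$ and $C_D = C_{\max}$, so
\begin{equation}
\mathbb{E}_{\textrm{HLMC}}[R_{\textrm{w}}(T)] = \sum_{d=1}^{D} (C_d - C_{d-1}),
\end{equation}
and each term $C_d - C_{d-1}$ isolates the reward loss due to running EXP3 at level $d$ rather than committing to $\ell_d^*$, assuming the optimal path above level $d$ and EXP3 at all lower levels.

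Next I would bound each $C_d - C_{d-1}$ using Lemma 3 after reducing level $d$ to a memory-unconstrained adversarial bandit with $N_d$ super-arms and $S_d$ super-time-steps. Along the fixed optimal path at levels $1,\ldots,d-1$, the level-$d$ strategy is invoked once; the reward of selecting super-arm $\ell$ at super-time-step $s$ is the average per-step reward produced by running the nested level-$(d+1),\ldots,D$ EXP3 routines inside the corresponding level-$d$ subepoch of length $\prod_{j>d} S_j$. As argued in Sec.~\ref{subsec:weakregret} for $D=2$, for a fixed reward sequence these noisy super-rewards have deterministic conditional means depending only on $(\ell,s)$, and their realizations are independent across $(\ell,s)$ and of the past level-$d$ selection history. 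Lemma 3 then bounds the level-$d$ regret against its best super-arm by $2\sqrt{2 S_d N_d \ln N_d}$ under $\gamma_d = \sqrt{N_d \ln N_d /(2 S_d)}$. Scaling by the subepoch length $\prod_{j>d} S_j = T/\prod_{j\le d} S_j$ and using $T = \prod_j S_j$ yields
\begin{equation}
C_d - C_{d-1} \le 2\sqrt{2}\, T \sqrt{N_d \ln N_d / S_d}.
\end{equation}

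Finally I would optimize over $(S_1,\ldots,S_D)$ subject to $\prod_d S_d = T$. Since $N_d = \lceil K^{1/D}\rceil$, the $D$ bounds are symmetric in $d$, so setting $S_d = \lceil T^{1/D}\rceil$ balances all terms; each contributes $O\!\left(T^{1-1/(2D)} K^{1/(2D)} \sqrt{(\ln K)/D}\right)$. Summing over $d = 1, \ldots, D$ and absorbing $\sqrt{(\ln K)/D}$ into the logarithmic factor gives the claimed $O(D T^{1-1/(2D)} K^{1/(2D)})$.

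The main obstacle will be a clean justification of the reduction at every level, i.e., verifying that the super-rewards observed by each level-$d$ EXP3 instance form a valid oblivious adversarial sequence with independent noise across super-time-steps and no dependence on the level-$d$ selection history. The $D=2$ and $D=3$ arguments dispatch this by a direct tower-of-expectations argument; for general $D$ the cleanest route is a downward induction on $d$, with base case $d=D$ reducing to plain EXP3 on $N_D$ arms, and the inductive step using independence of EXP3 randomizations across levels together with the fact that every lower-level EXP3 instance is restarted at the start of its subepoch.
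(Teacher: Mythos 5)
Your proposal is correct and follows essentially the same route as the paper: the hybrid-policy telescoping $C_0,\ldots,C_D$ is exactly the generalization of the chain $C_{\textrm{HLMC}},C''_{\max},C'_{\max},C_{\max}$ used for $D=3$ in Theorem~\ref{thm:weakregretML}, the per-level bound $2\sqrt{2}\,T\sqrt{N_d\ln N_d/S_d}$ and the balanced choice $S_d=\lceil T^{1/D}\rceil$ match the paper's, and the reduction-with-noisy-observations issue you flag is the one the paper dispatches via Lemma~\ref{lemma:groupregret}. The only cosmetic point is that for the intermediate levels you should invoke Lemma~\ref{lemma:groupregret} (EXP3 with independent noisy super-rewards, stated in terms of the means $x_{\ell,s}$) rather than the deterministic-reward Lemma 3, which applies only at the leaf level.
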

\begin{proof}
The proof is similar to the ones of Theorem \ref{thm:weakregretUB} and \ref{thm:weakregretML} and thus, we omit the details.
\end{proof}

Corollary \ref{cor:weakregretML} indicates that the tradeoff between the regret order and memory complexity of HLMC depends on the depth~$D$ of the adopted hierarchy: a deeper hierarchy incurs a higher regret order with a smaller memory complexity. We further establish a memory-dependent regret upper bound of HLMC by adaptively selecting $D$ based on the size $M$ of the available memory space. In particular, we define the minimum depth $D^*(M)$ of a legitimate hierarchy when $M$ words of memory are available:
\begin{equation}
D^* (M)=\min\{D\in\mathbb{N}^+: D\lceil K^{1/D}\rceil\le M\}.
\end{equation}
Thus, the minimum regret achieved by HLMC with $M$ words of memory is of order 
\begin{equation}
O\left(D^*(M)T^{1-\frac{1}{2D^*(M)}}K^{\frac{1}{2D^*(M)}}\right).
\end{equation}

In one extreme case when $M=\Theta(\log_2 K)$, the size of the available memory space matches the minimum complexity of the deepest hierarchy where $D^*(M)=\lceil\log_2 K\rceil$. In this case, the regret order achieved by HLMC is still sublinear in~$T$. In the other extreme case when $M\ge K$ (i.e., the memory-unconstrained case), it is clear that $D^*(M)=1$ and HLMC with a single-level hierarchy reduces to an existing learning routine for memory-unconstrained adversarial bandits.

One may notice that the memory-dependent regret order of HLMC does not improve when $M$ increases but $D^*(M)$ is unchanged, since the dependency of the regret order with respect to the available memory is quantified. 
However, in practice, a larger memory space may help in achieving a smaller regret if arms are adaptively partitioned according to~$M$, even if $D$ is fixed. We take the two-level case as an example: given $M$ words of memory, we let $N=\lceil\frac{M-\sqrt{M^2-4K}}{2}\rceil$ and $L=\lceil\frac{K}{N}\rceil$. As long as $M\ge 2\sqrt{K}$, the arm partition is legitimate and one can verify that  $N+L\le M$. It is not difficult to check that the theoretical regret orders established in Sec.~\ref{sec:analysis} still hold under the adaptive arm partition. We further show in Sec. \ref{subsec:regretvsmemory} through numerical examples that under certain conditions, the regret performance of HLMC using adaptive arm partitions in the two-level hierarchy improves as $M$ increases.

\section{Numerical Examples}\label{sec:numerical}
In this section, we illustrate the regret performance of the proposed HLMC learning structure numerically through simulations. All the experiments are run 10 times using a Monte Carlo method on Python 3.7.

\subsection{Weak Regret Minimization}\label{subsec:numericalweak}
We conduct two experiments to compare the regret performance of HLMC with baseline ones under the notion of weak regret. Given that this is the first work on memory-constrained adversarial bandits, we consider two baselines: UCB-M (proposed in \cite{chaudhuri2019regret} for memory constrained stochastic bandits) and EXP3 (for classic adversarial bandits without memory constraints).

We first notice that the only randomness of UCB-M comes from the random shuffle of arm indices before playing arms, which provides no improvement on the performance in the stochastic setting. Without the random shuffle step, UCB-M is purely deterministic and thus, we can easily construct a reward sequence such that UCB-M incurs a regret linear in $T$. Specifically, in the first experiment, we consider the following setup: let $K=100$, $M=20$, and $T=10^7$. In accordance with the UCB-M policy, we partition the time horizon into phases with exponentially growing lengths $2^ih_0b_0$ ($i=0,2,...$). Each phase is further partitioned evenly into~$h_0$ sub-phases with length $2^ib_0$. We select $h_0=\lceil\frac{K-1}{M-1}\rceil$ and $b_0=M(M+2)$. For each phase, we assign arm rewards as follows: during each subphase $u=0,2,...,h_0-1$, we let arm $(M(u+1)\textrm{~mod~}K)$ offer reward $1$ and the other arms offer reward $0$. Since UCB-M selects arm groups with size $M$ in a round-robin fashion, it is clear that arms selected by UCB-M offers $0$ reward at almost all time steps. The weak regret of UCB-M is clearly linear in $T$. For HLMC, we adopt a two-level hierarchy and apply EXP3 to both group and arm levels. The simulation results on the expected weak regret are presented in Fig. \ref{fig:weakregret1}.

\begin{figure}[t]
\hspace{.2cm}
\centerline{\includegraphics[width=1.1\columnwidth]{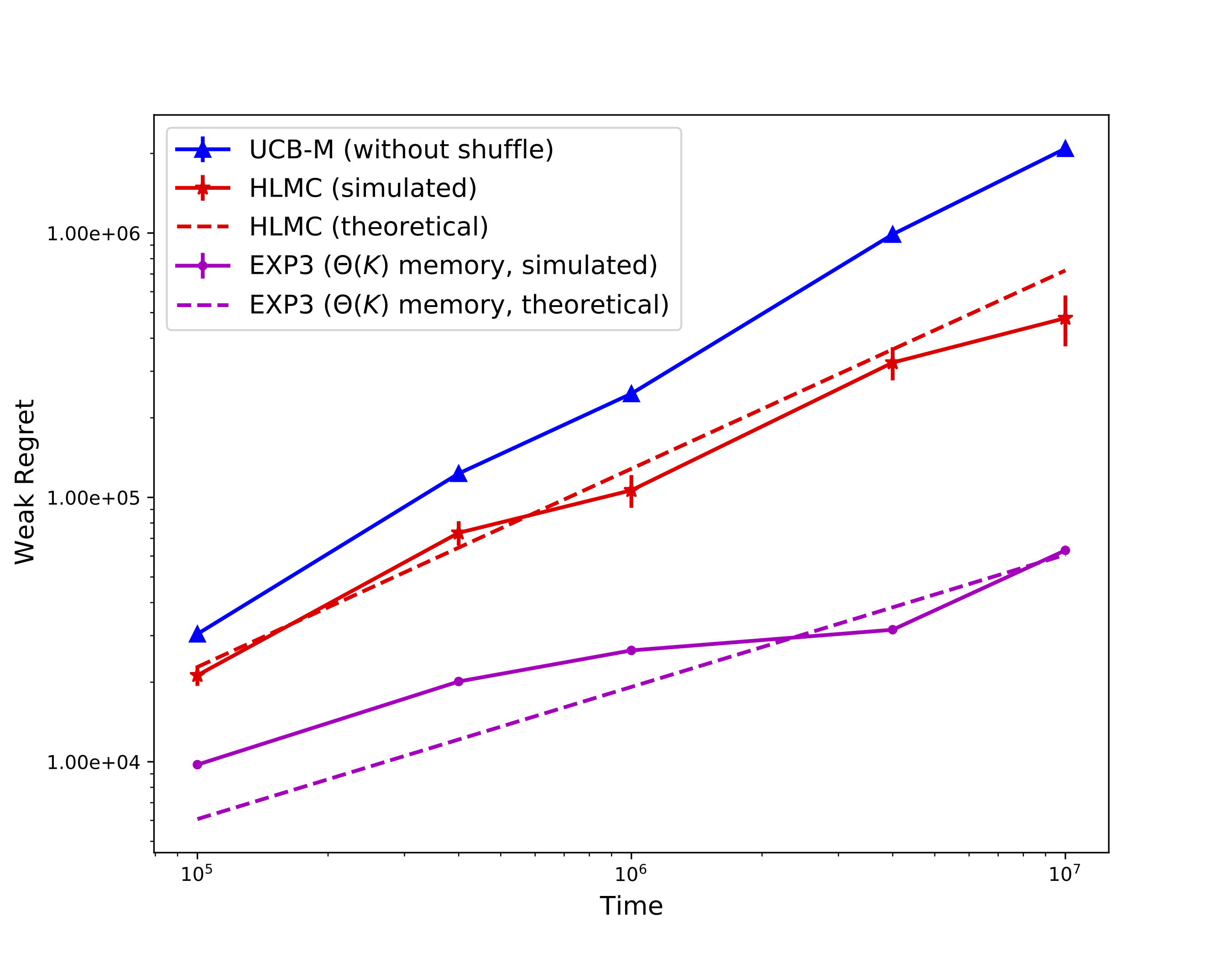}}
\caption{Comparison of the weak regret of UCB-M (without random shuffle of arm indices), HLMC, and EXP3:~$K=100$, $M=20$, and $T=10^7$. The time horizon is partitioned into phases with exponentially growing lengths~$2^ih_0b_0$ ($i=0,2,...$). Each phase is partitioned evenly into~$h_0$ sub-phases with length $2^ib_0$. During each subphase $u$, arm $(M(u+1)\textrm{~mod~}K)$ offers reward $1$ and the other arms offer reward $0$.}
\label{fig:weakregret1}
\end{figure}

From Fig. \ref{fig:weakregret1}, we can observe that HLMC outperforms the UCB-M policy under the constructed adversarial environment. The error bar indicates that the proposed learning policy is robust with low variance. Note that although the EXP3 algorithm achieves the best performance, it requires $\Theta(K)$ memory size, which is infeasible in the memory-constrained setting. We also plot the theoretical upper bounds on the regret of HLMC and EXP3 (i.e.,~$2T'^{\frac{3}{4}}K^{\frac{1}{4}}(\ln K)^{\frac{1}{2}}$ and $2\sqrt{T'K\ln K}$ where $T'=T/5$ due to the fact that the cumulative reward of the best arm is $T/5$ instead of $T$ in this experiment\footnote{The choice of the constant in front of $T, K$ does not change the regret order. To demonstrate that the theoretical regret bound and the simulated results have the same order, we set the constant equal to $2$.}), which verify that the expected weak regret of HLMC has the same order with the theoretical upper bounded established in Theorem \ref{thm:weakregretUB}.

We further use another example to show that even with the random shuffle step, UCB-M still fails to avoid a linear regret in $T$ against adversaries. We consider the same experiment setup with a different reward assignment. Specifically, the phase and subphase partitions are the same with those in the first experiment. During each subphase $u=0,2,...,h_0-1$, we let arm $1$ offer $(u\textrm{~mod~}2)$ reward and the other arms offer $\epsilon=1\times 10^{-4}$ rewards. It is not difficult to check that after every time arm $1$ is selected by UCB-M and offers reward $1$, it will offer $0$ reward in the next subphase and will be excluded from memory. Therefore, significant regret is incurred in the subphase after next, when arm $1$ offers $1$ reward again. Over the entire time horizon, UCB-M suffers a linear regret order in~$T$. Moreover, we added another baseline: EXP3-M by changing the UCB subroutine in UCB-M to the EXP3 subroutine. The simulation results are presented in Fig. \ref{fig:weakregret2}, which again verify the advantage of HLMC against UCB-M and EXP3-M. It should be noted that even with the random shuffle step or a subroutine developed for classic adversarial bandits during every epoch, the UCB-M and EXP3-M algorithms still suffer significant regret due to the fact that the algorithmic structure of the two algorithms fails to balance between what to remember and what to forget in the adversarial setting. Besides, the random shuffle step in UCB-M and EXP3-M introduces high variance with little improvement on the expected weak regret. The comparison between the theoretical upper bounds and the simulated results also verifies the correctness of our analysis in Theorem \ref{thm:weakregretUB}.

\begin{figure}[t]
\begin{center}
\centerline{\includegraphics[width=1.1\columnwidth]{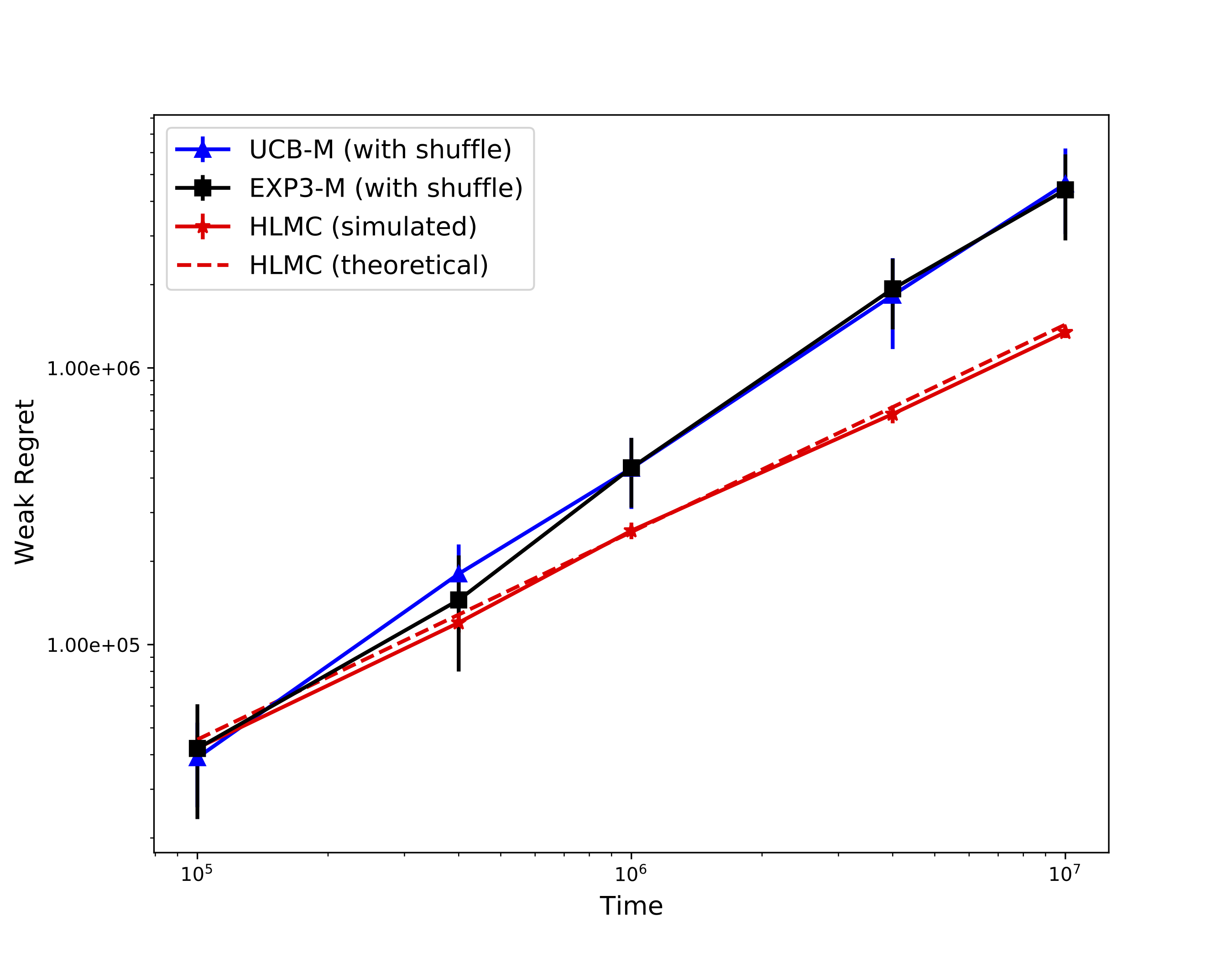}}
\caption{Comparison of the weak regret of UCB-M (with random shuffle of arm indices), EXP3-M (with random shuffle of arm indices) and HLMC: the time partition is the same with that in Fig. \ref{fig:weakregret1}. During each subphase~$u$, arm~1 offers $(u\textrm{~mod~}2)$ reward and the other arms offer $\epsilon=10^{-4}$ reward.}
\label{fig:weakregret2}
\end{center}
\end{figure}

\subsection{Shifting Regret Minimization}
We further conduct an experiment to show the regret performance of HLMC with a two-level hierarchy under the notion of shifting regret. As discussed in Sec. \ref{subsec:shiftregret}, by adopting EXP3.S at the group level, HLMC achieves a sublinear scaling of shifting regret in $T$. In this experiment, we compare the performance of HLMC adopting EXP3.S at the group level and EXP3 at the arm level (referred to as HLMC.S in this subsection), HLMC adopting EXP3 at both group and arm levels (referred to as HLMC in this subsection), EXP3, and EXP3.S. The experiment is set up as follows: let $K=16, M=8$, and $T=10^6$. The time horizon is partitioned evenly into $V=10$ phases. In phase $v=0,1,...,V-1$, we let arm $i_v=(vN\textrm{~mod~} K)$ offer reward $1$ and the other arms offer reward $0$ ($N$ is the group size defined in the HLMC framework, which equals $4$ in this experiment). It is clear that the best benchmark policy in the shifting regret definition with hardness $V$ is to play the best arm $i_v$ within every phase $v$. The simulation results are presented in Fig. \ref{fig:shiftregret}.

\begin{figure}[t]
\begin{center}
\centerline{\includegraphics[width=1.1\columnwidth]{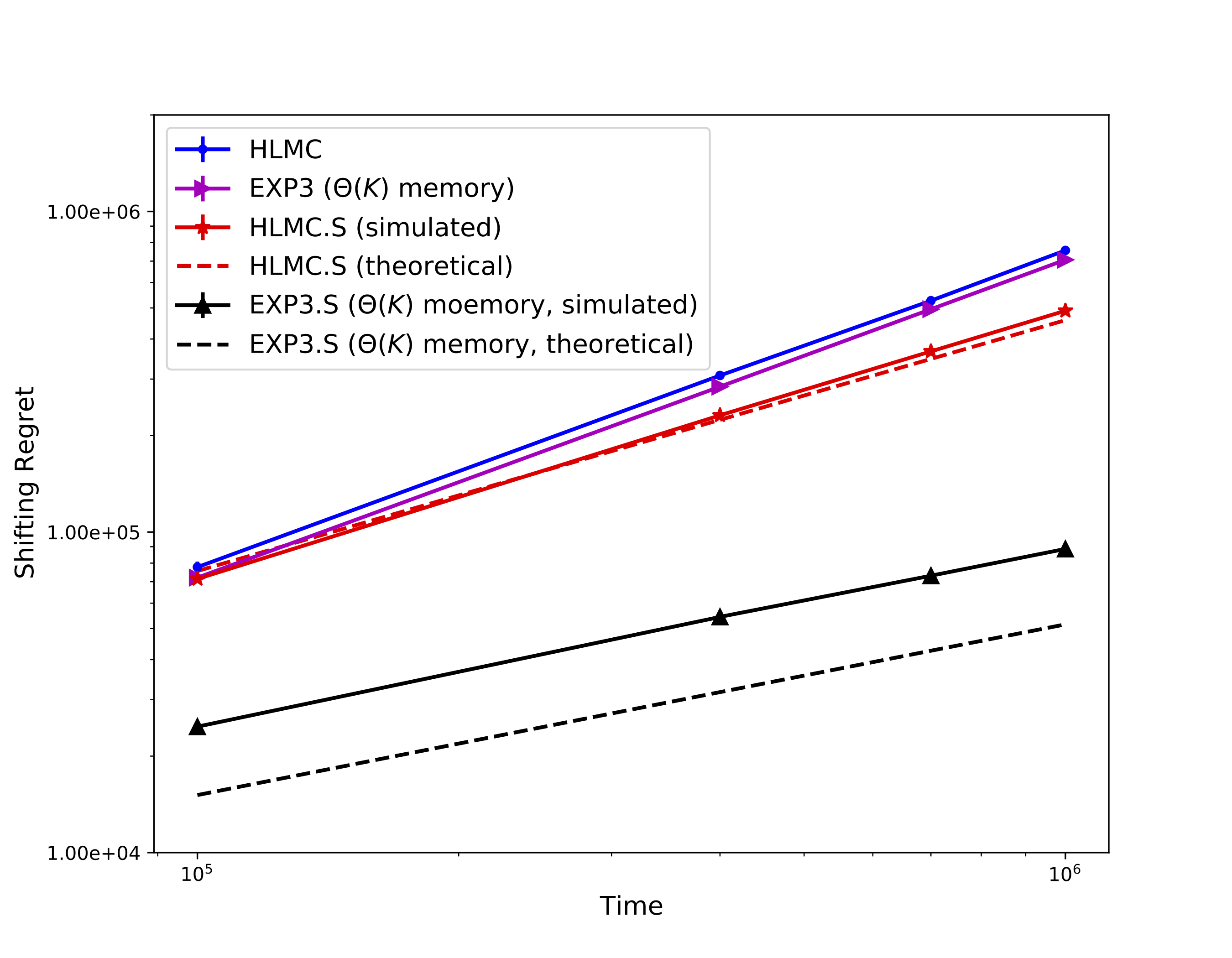}}
\caption{Comparison of the shifting regret of HLMC.S, HLMC, EXP3, and EXP3.S: $K=16, M=8$, and $T=10^6$. The time horizon is partitioned evenly into $V=10$ phases. In phase $v=0,1,...,V-1$, arm $i_v=(vN\textrm{~mod~} K)$ offer reward $1$ and the other arms offer reward $0$.}
\label{fig:shiftregret}
\end{center}
\end{figure}

It can be observed from Fig. \ref{fig:shiftregret} that HLMC.S designed for shifting regret minimization outperforms HLMC and EXP3 for weak regret minimization. Adopting EXP3.S at the group level of the HLMC framework improves the regret performance under the notion of shifting regret. Moreover, the error bar verifies the robustness of the proposed policies. It should be noted that although EXP3.S outperforms HLMC and HLMC.S, it requires $\Theta(K)$ memory space, which is inapplicable in the memory-constrained setting.

\subsection{Impact of Available Memory on Regret Performance}\label{subsec:regretvsmemory}
In this subsection, we show the impact of the size of available memory space on the regret performance of HLMC. We use the same experiment setup with that in the first experiment in Sec. \ref{subsec:numericalweak}. We compare the weak regret of HLMC with $M=14,20,50,80$. Specifically, when $M=14$, the HLMC framework requires a three-level hierarchy with $N_1=5, N_2=5$, and $N_3=4$. When $M=20,50,80$, HLMC adopts two-level hierarchies with $N=\lceil\frac{M-\sqrt{M^2-4K}}{2}\rceil$ and $L=\lceil K/N\rceil$. 

The results in Fig. \ref{fig:weakregret_memory} show that the regret performance of HLMC improves as the size of the memory space increases. In particular, adopting a hierarchy with fewer levels improves the regret order as indicated in Corollary \ref{cor:weakregretML}. Even with the same number of levels, a smaller regret can be achieved with a larger memory space. Intuitively, as $M$ increases, the epoch length $\Delta$ decrease. Since the reward sequence assigned in the experiment is stable within a short period but varies vastly in the long run (it has been argued in \cite{zimmert2019beating} that such a reward assignment is justified in various real-world applications), the arm-level regret is dominated by the group-level regret and the latter decreases with the epoch length. We also plot the theoretical upper bounds on the regret of HLMC with different levels of hierarchies. The comparison between the theoretical the simulated results verifies our analysis.

\begin{figure}[t!]
\begin{center}
\centerline{\includegraphics[width=1.1\columnwidth]{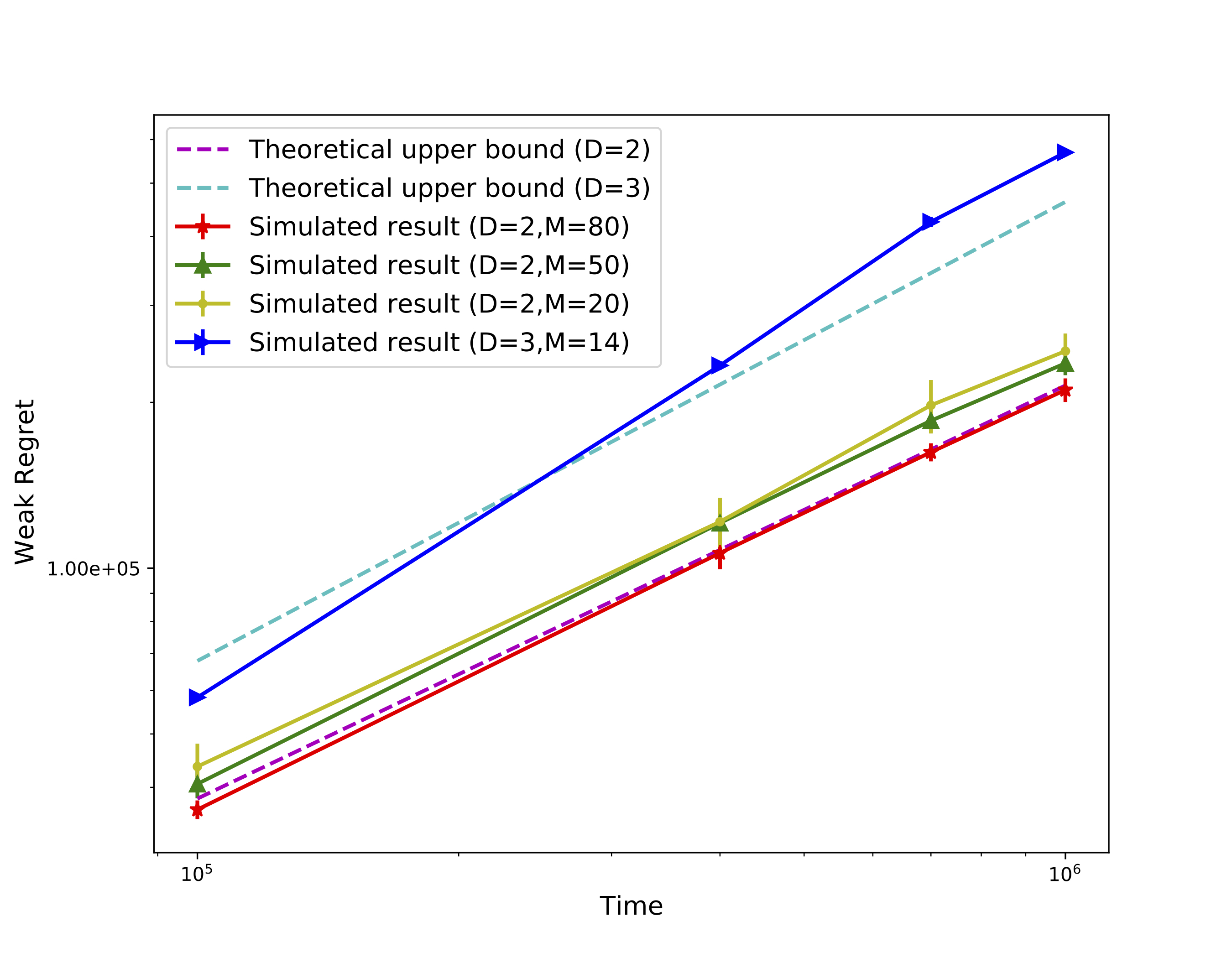}}
\caption{Comparison of the weak regret of HLMC with $M=14,20,50,80$ memory space: the experiment setup is the same with that in Fig. \ref{fig:weakregret1}. When $M=14$, the HLMC framework adopts a three-level hierarchy with $N_1=5, N_2=5$, and $N_3=4$. When $M=20,50,80$, HLMC adopts two-level hierarchies with $N=\lceil\frac{M-\sqrt{M^2-4K}}{2}\rceil$ and $L=\lceil K/N\rceil$.}
\label{fig:weakregret_memory}
\end{center}
\end{figure}

\subsection{Distributed Dynamic Spectrum Access in the Presence of Jamming}
In this subsection, we consider the application of distributed dynamic spectrum access in the presence of jamming in multi-agent wireless communication systems. There are $1000$ distributed agents competing for $K=20$ channels (arms) and an attacker that is jamming the channels. The transmission rate of a channel is modeled as the reward of the corresponding arm. The quality of a channel depends on whether it is jammed by the attacker and how many distributed agents are accessing the channel simultaneously. Specifically, we assume that if a channel is not jammed, it offers reward $10$ and all accessing agents evenly share the reward, i.e., every agent receives $10/n_t$ reward where $n_t$ is the number of agents selecting the unjammed channel at time $t$. For the jammed channels, an agent can only receive $1$ reward if there is no collision (if there are more than two agents selecting the same arm, no agent can receive reward from this arm). In this experiment, we consider an attacker that jams all but one channel at every time step $t$ and the unjammed channel changes at the beginning of every phase and circulates among the $K$ channels.

From the perspective of every agent, the problem can be modeled as a memory-constrained adversarial bandit problem studied in this paper. Due to limited memory on distributed wireless devices, every agent can store at most $M=10$ statistics of arm rewards. We compare the per-agent average reward where each agent adopts HLMC with that adopting UCB-M (with and without random shuffle). The simulation result is shown in Fig. \ref{fig:channel_selection}, which again demonstrates the advantage of HLMC against UCB-M in the adversarial setting. 

\begin{figure}[t!]
\begin{center}
\centerline{\includegraphics[width=1.1\columnwidth]{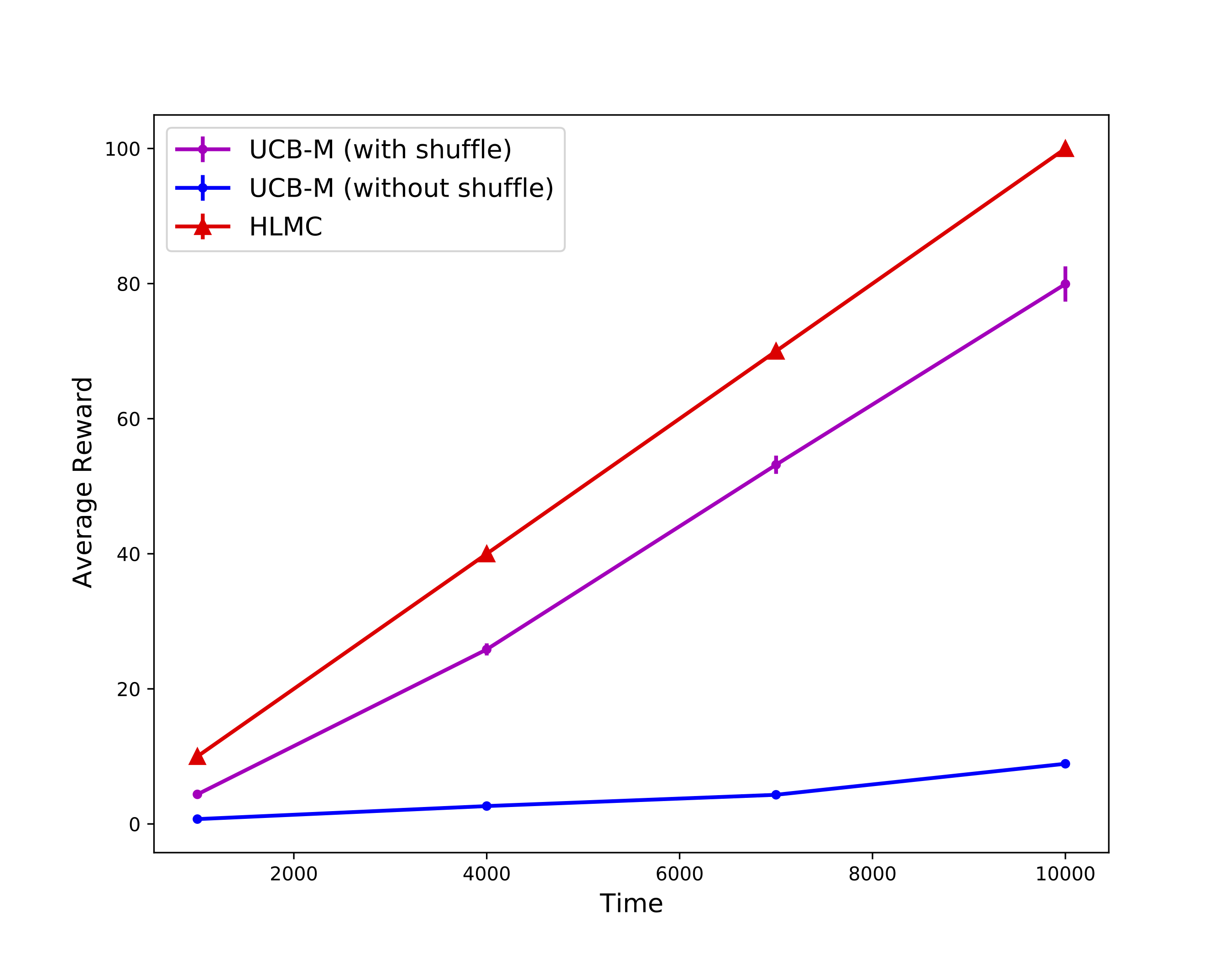}}
\caption{Comparison of the average reward per agent by adopting HLMC and UCB-M (with and without random shuffle of arm indices) in dynamic spectrum access in the presence of jamming: $1000$ distributed agents compete for $K=20$ channels. An attacker adversarially jams all but one channel at every time step and the unjammed channel circulates among the $K$ channels. For the unjammed channel, all accessing agents evenly share $10$ reward. For the jammed channel, an agent can only receive 1 reward if there is no collision.}
\label{fig:channel_selection}
\vspace{-.5cm}
\end{center}
\end{figure}

\section{Conclusions and Discussions}
In this paper, we studied the problem of adversarial multi-armed bandits with memory constraints. We proposed a general hierarchical learning framework: HLMC that adopts a multi-level hierarchy to partition the arms into groups and the time horizon into epochs. The HLMC framework decouples the tradeoff between what to remember and what to forget induced by memory constraints from the one between exploration and exploitation due to bandit feedback. We showed in the two-level case that, by employing different existing learning routines developed for memory-unconstrained bandits at both levels of the hierarchy, HLMC achieves no-regret learning under various regret notions with a memory complexity sublinear in the number of arms. We further showed that through designing the depth of the adopted hierarchy, HLMC achieves different operating points at the tradeoff curve between the regret order and memory complexity. We conducted numerical experiments to verify the advantages of HLMC against existing baselines.

Several questions remain open in this problem. It is unclear whether $\Theta(\log K)$ is the minimum memory complexity required for achieving no-regret learning in the adversarial setting. Moreover, the current hierarchical partition of arm groups is pre-determined. It worth studying whether a dynamic (potentially stochastic) grouping strategy that depends on past observations can improve the regret performance. More importantly, whether the sequence of operating points offered by the proposed algorithm traces the Pareto front
of this fundamental tradeoff between regret performance and memory complexity is an interesting open question that requires a separate full investigation. Another potential research direction is to find the best of both worlds, that is, a learning policy achieving the optimal regret orders in both stochastic and adversarial settings with memory constraints.

% if have a single appendix:
%\appendix[Proof of the Zonklar Equations]
% or
%\appendix  % for no appendix heading
% do not use \section anymore after \appendix, only \section*
% is possibly needed

% use appendices with more than one appendix
% then use \section to start each appendix
% you must declare a \section before using any
% \subsection or using \label (\appendices by itself
% starts a section numbered zero.)
%

% Can use something like this to put references on a page
% by themselves when using endfloat and the captionsoff option.
\ifCLASSOPTIONcaptionsoff
  \newpage
\fi

% trigger a \newpage just before the given reference
% number - used to balance the columns on the last page
% adjust value as needed - may need to be readjusted if
% the document is modified later
%\IEEEtriggeratref{8}
% The "triggered" command can be changed if desired:
%\IEEEtriggercmd{\enlargethispage{-5in}}

% references section

% can use a bibliography generated by BibTeX as a .bbl file
% BibTeX documentation can be easily obtained at:
% http://mirror.ctan.org/biblio/bibtex/contrib/doc/
% The IEEEtran BibTeX style support page is at:
% http://www.michaelshell.org/tex/ieeetran/bibtex/
%\bibliographystyle{IEEEtran}
% argument is your BibTeX string definitions and bibliography database(s)
%\bibliography{IEEEabrv,../bib/paper}
%
% <OR> manually copy in the resultant .bbl file
% set second argument of \begin to the number of references
% (used to reserve space for the reference number labels box)
\bibliographystyle{IEEEtran}
\bibliography{Reference}

% Generated by IEEEtran.bst, version: 1.14 (2015/08/26)
\begin{thebibliography}{10}
\providecommand{\url}[1]{#1}
\csname url@samestyle\endcsname
\providecommand{\newblock}{\relax}
\providecommand{\bibinfo}[2]{#2}
\providecommand{\BIBentrySTDinterwordspacing}{\spaceskip=0pt\relax}
\providecommand{\BIBentryALTinterwordstretchfactor}{4}
\providecommand{\BIBentryALTinterwordspacing}{\spaceskip=\fontdimen2\font plus
\BIBentryALTinterwordstretchfactor\fontdimen3\font minus
  \fontdimen4\font\relax}
\providecommand{\BIBforeignlanguage}[2]{{%
\expandafter\ifx\csname l@#1\endcsname\relax
\typeout{** WARNING: IEEEtran.bst: No hyphenation pattern has been}%
\typeout{** loaded for the language `#1'. Using the pattern for}%
\typeout{** the default language instead.}%
\else
\language=\csname l@#1\endcsname
\fi
#2}}
\providecommand{\BIBdecl}{\relax}
\BIBdecl

\bibitem{thompson1933likelihood}
W.~R. Thompson, ``On the likelihood that one unknown probability exceeds
  another in view of the evidence of two samples,'' \emph{Biometrika}, vol.~25,
  no. 3/4, pp. 285--294, 1933.

\bibitem{zhao2019multi}
Q.~Zhao, \emph{Multi-Armed Bandits: Theory and Applications to Online Learning
  in Networks}.\hskip 1em plus 0.5em minus 0.4em\relax Morgan \& Claypool
  Publishers, 2019.

\bibitem{lai1985asymptotically}
T.~L. Lai and H.~Robbins, ``Asymptotically efficient adaptive allocation
  rules,'' \emph{Advances in Applied Mathematics}, vol.~6, no.~1, pp. 4--22,
  1985.

\bibitem{auer2002finite}
P.~Auer, N.~Cesa-Bianchi, and P.~Fischer, ``Finite-time analysis of the
  multiarmed bandit problem,'' \emph{Machine Learning}, vol.~47, no. 2-3, pp.
  235--256, 2002.

\bibitem{garivier2011kl}
A.~Garivier and O.~Capp{\'e}, ``The {KL-UCB} algorithm for bounded stochastic
  bandits and beyond,'' in \emph{Proceedings of the 24th Annual Conference on
  Learning Theory}, 2011, pp. 359--376.

\bibitem{vakili2013deterministic}
S.~Vakili, K.~Liu, and Q.~Zhao, ``Deterministic sequencing of exploration and
  exploitation for multi-armed bandit problems,'' \emph{IEEE Journal of
  Selected Topics in Signal Processing}, vol.~7, no.~5, pp. 759--767, 2013.

\bibitem{auer1995gambling}
P.~Auer, N.~Cesa-Bianchi, Y.~Freund, and R.~E. Schapire, ``Gambling in a rigged
  casino: The adversarial multi-armed bandit problem,'' in \emph{Proceedings of
  IEEE 36th Annual Foundations of Computer Science}.\hskip 1em plus 0.5em minus
  0.4em\relax IEEE, 1995, pp. 322--331.

\bibitem{cesa2006prediction}
N.~Cesa-Bianchi and G.~Lugosi, \emph{Prediction, learning, and games}.\hskip
  1em plus 0.5em minus 0.4em\relax Cambridge university press, 2006.

\bibitem{young2004strategic}
H.~P. Young, \emph{Strategic learning and its limits}.\hskip 1em plus 0.5em
  minus 0.4em\relax OUP Oxford, 2004.

\bibitem{lykouris2016learning}
T.~Lykouris, V.~Syrgkanis, and {\'E}.~Tardos, ``Learning and efficiency in
  games with dynamic population,'' in \emph{Proceedings of the twenty-seventh
  annual ACM-SIAM symposium on Discrete algorithms}.\hskip 1em plus 0.5em minus
  0.4em\relax SIAM, 2016, pp. 120--129.

\bibitem{duvocelle2018learning}
B.~Duvocelle, P.~Mertikopoulos, M.~Staudigl, and D.~Vermeulen, ``Learning in
  time-varying games,'' \emph{arXiv preprint arXiv:1809.03066}, 2018.

\bibitem{xu2020distributed}
X.~Xu and Q.~Zhao, ``Distributed no-regret learning in multiagent systems:
  Challenges and recent developments,'' \emph{IEEE Signal Processing Magazine},
  vol.~37, no.~3, pp. 84--91, 2020.

\bibitem{auer2002nonstochastic}
P.~Auer, N.~Cesa-Bianchi, Y.~Freund, and R.~E. Schapire, ``The nonstochastic
  multiarmed bandit problem,'' \emph{SIAM Journal on Computing}, vol.~32,
  no.~1, pp. 48--77, 2002.

\bibitem{audibert2009minimax}
J.-Y. Audibert and S.~Bubeck, ``Minimax policies for adversarial and stochastic
  bandits,'' in \emph{Proceedings of the 22nd Annual Conference on Learning
  Theory}, 2009, pp. 217--226.

\bibitem{bubeck2012regret}
S.~Bubeck and N.~Cesa-Bianchi, ``Regret analysis of stochastic and
  nonstochastic multi-armed bandit problems,'' \emph{Machine Learning}, vol.~5,
  no.~1, pp. 1--122, 2012.

\bibitem{bande2019adversarial}
M.~{Bande} and V.~V. {Veeravalli}, ``Adversarial multi-user bandits for
  uncoordinated spectrum access,'' in \emph{ICASSP 2019 - 2019 IEEE
  International Conference on Acoustics, Speech and Signal Processing
  (ICASSP)}, 2019, pp. 4514--4518.

\bibitem{vural2019minimax}
N.~M. {Vural}, H.~{Gokcesu}, K.~{Gokcesu}, and S.~S. {Kozat}, ``Minimax optimal
  algorithms for adversarial bandit problem with multiple plays,'' \emph{IEEE
  Transactions on Signal Processing}, vol.~67, no.~16, pp. 4383--4398, 2019.

\bibitem{liau2018stochastic}
D.~Liau, Z.~Song, E.~Price, and G.~Yang, ``Stochastic multi-armed bandits in
  constant space,'' in \emph{International Conference on Artificial
  Intelligence and Statistics}, 2018, pp. 386--394.

\bibitem{chaudhuri2019regret}
A.~R. Chaudhuri and S.~Kalyanakrishnan, ``Regret minimisation in multi-armed
  bandits using bounded arm memory,'' \emph{arXiv preprint arXiv:1901.08387},
  2019.

\bibitem{bubeck2009pure}
S.~Bubeck, R.~Munos, and G.~Stoltz, ``Pure exploration in multi-armed bandits
  problems,'' in \emph{International Conference on Algorithmic Learning
  Theory}.\hskip 1em plus 0.5em minus 0.4em\relax Springer, 2009, pp. 23--37.

\bibitem{robbins1956sequential}
H.~Robbins, ``A sequential decision problem with a finite memory,''
  \emph{Proceedings of the National Academy of Sciences of the United States of
  America}, vol.~42, no.~12, p. 920, 1956.

\bibitem{cover1968note}
T.~M. Cover, ``A note on the two-armed bandit problem with finite memory,''
  \emph{Information and Control}, vol.~12, no.~5, pp. 371--377, 1968.

\bibitem{cover1970two}
T.~Cover and M.~Hellman, ``The two-armed-bandit problem with time-invariant
  finite memory,'' \emph{IEEE Transactions on Information Theory}, vol.~16,
  no.~2, pp. 185--195, 1970.

\bibitem{lu2011making}
C.-J. Lu and W.-F. Lu, ``Making online decisions with bounded memory,'' in
  \emph{International Conference on Algorithmic Learning Theory}.\hskip 1em
  plus 0.5em minus 0.4em\relax Springer, 2011, pp. 249--261.

\bibitem{zimmert2019beating}
J.~Zimmert, H.~Luo, and C.-Y. Wei, ``Beating stochastic and adversarial
  semi-bandits optimally and simultaneously,'' in \emph{International
  Conference on Machine Learning}, 2019, pp. 7683--7692.

\bibitem{herbster1998tracking}
M.~Herbster and M.~K. Warmuth, ``Tracking the best expert,'' \emph{Machine
  Learning}, vol.~32, no.~2, pp. 151--178, 1998.

\end{thebibliography}

\onecolumn
\appendices
\section{Existing Memory-Unconstrained Learning Routines}\label{Existing Algorithm}

The EXP3 algorithm was first proposed in \cite{auer1995gambling} to minimize the expected weak regret. Following this algorithm, a player randomly selects an action $i_t$ according to a distribution~$(p_{i,t})_{i\in\mathcal{A}}$ at every time $t$. The probability $p_{i,t}$ is the sum of two components. The first one is proportional to a weight $w_{i,t}$ exponential in the estimated cumulative reward from arm $i$ up to time $t$, i.e., $w_{i,t}=\prod_{\tau=1}^{t}\exp(\gamma\hat{r}_{i,\tau}/K)$, where~$\gamma>0$ is the learning rate and $\hat{r}_{i,\tau}$ is an unbiased estimate of $r_{i,t}$ with respect to the random arm selection. The second component is a random exploration term $\gamma/K$ ensuring sufficient exploration of every arm. The details of EXP3 are summarized in Algorithm \ref{alg:EXP3}. It has been shown that EXP3 achieves a sublinear regret order in $T$ under the notion of expected weak regret.

\begin{algorithm}[h!]
   \caption{EXP3 \cite{auer1995gambling,auer2002nonstochastic}}
   \label{alg:EXP3}
   \begin{algorithmic}
   \STATE {\bfseries Input:} $\mathcal{A}$ the arm set and $\gamma\in(0,1)$.

      \STATE {\bfseries Initialization:} $w_{i,1}=1,\forall i\in\mathcal{A}$.
       \FOR{$t=1,2,...,T$}
   	        \STATE Let $$p_i(t)=(1-\gamma)\frac{w_{i,t}}{\sum_{j\in\mathcal{A}}w_{j,t}}+\frac{\gamma}{K}, ~~~\forall i\in\mathcal{A}.$$
   	        \STATE Draw arm $i_t$ according to the probabilities $(p_{i,t})_{i\in\mathcal{A}}$.
   	        \STATE Receive reward $r_{i_t,t}$.
   	        \STATE Let $$\hat{r}_{i,t}=\frac{r_{i,t}}{p_{i,t}}\mathbb{I}(i_t=i).$$
   	        \STATE Update
   	        $$
   	        w_{i,t+1}=w_{i,t}\exp\left(\frac{\gamma\hat{r}_{i,t}}{K}\right),~~~\forall i\in\mathcal{A}.
   	        $$
       \ENDFOR

\end{algorithmic}
\end{algorithm}

\begin{lemma}[Corollary 3.2 in \cite{auer2002nonstochastic}]\label{lemma:armregret}
By choosing $\gamma=\sqrt{\frac{K\ln K}{2T}}$, the expected weak regret of EXP3 over a time horizon of length~$T$ is upper bounded by
\begin{align}
\max_{i\in\mathcal{A}}\sum_{t=1}^{T}r_{i,t}-\mathbb{E}_{\textrm{EXP3}}\left[\sum_{t=1}^Tr_{i_t,t}\right]\le 2\sqrt{2T K\ln K}
\end{align}
for every assignment of the reward sequence, where $i_t$ is the arm selected by EXP3 at time $t$.
\end{lemma}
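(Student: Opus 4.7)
\textbf{Proof proposal for Lemma~\ref{lemma:armregret}.}

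My plan is to use the standard potential-function (exponential-weights) analysis adapted to the bandit setting through the importance-weighted estimator $\hat r_{i,t}=(r_{i,t}/p_{i,t})\mathbb{I}(i_t=i)$. The key invariant is the total weight $W_t=\sum_{i\in\mathcal{A}} w_{i,t}$, and the argument reduces to a two-sided bound on $\ln(W_{T+1}/W_1)$. On the one hand, for any fixed benchmark arm $i^{\ast}$, monotonicity of the sum gives
\begin{align*}
\ln(W_{T+1}/W_1)\;\ge\;\ln(w_{i^{\ast},T+1}/W_1)\;=\;\tfrac{\gamma}{K}\sum_{t=1}^{T}\hat r_{i^{\ast},t}-\ln K.
\end{align*}
This converts the problem of controlling the regret against the best arm into the problem of upper bounding the log-potential.

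For the matching upper bound, I would bound each one-step increment $\ln(W_{t+1}/W_t)=\ln\bigl(\sum_i (w_{i,t}/W_t)\,e^{\gamma\hat r_{i,t}/K}\bigr)$. Because $p_{i,t}\ge\gamma/K$ and $r_{i,t}\in[0,1]$, the exponent $\gamma\hat r_{i,t}/K\le 1$, so I can apply $e^x\le 1+x+x^2$ followed by $\ln(1+y)\le y$. Replacing $w_{i,t}/W_t$ by $p_{i,t}/(1-\gamma)$ (using $w_{i,t}/W_t=(p_{i,t}-\gamma/K)/(1-\gamma)\le p_{i,t}/(1-\gamma)$) and telescoping over $t$ yields an inequality of the form
\begin{align*}
\ln(W_{T+1}/W_1)\;\le\;\frac{\gamma}{K(1-\gamma)}\sum_{t=1}^{T}\sum_{i}p_{i,t}\hat r_{i,t}\;+\;\frac{\gamma^2}{K^2(1-\gamma)}\sum_{t=1}^{T}\sum_{i}p_{i,t}\hat r_{i,t}^{2}.
\end{align*}

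Next I would take expectations and simplify using the two identities that make the importance-weighted estimator tractable: $\sum_i p_{i,t}\hat r_{i,t}=r_{i_t,t}$ (so the first-order term becomes the player's cumulative reward) and $p_{i,t}\hat r_{i,t}^{2}\le\hat r_{i,t}$ together with $\mathbb{E}[\hat r_{i,t}\mid\mathcal{F}_{t-1}]=r_{i,t}\le 1$, which controls the variance term by $KT$. Equating the upper and lower bounds, rearranging, and using $1/(1-\gamma)\le 1+2\gamma$ for small $\gamma$ gives an expected-regret bound of the generic form $\tfrac{K\ln K}{\gamma}+(e-1)\gamma T$ (or $\tfrac{K\ln K}{\gamma}+2\gamma T$ in the cleaner variant). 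Tuning $\gamma=\sqrt{K\ln K/(2T)}$ balances the two terms and produces the claimed $2\sqrt{2TK\ln K}$ bound.

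The only real technical obstacle is the careful bookkeeping around the variance term: one must exploit the uniform $\gamma/K$ exploration floor in $p_{i,t}$ to guarantee $\gamma\hat r_{i,t}/K\le 1$ (needed for the $e^x\le 1+x+x^2$ step), and simultaneously exploit the mixture form of $p_{i,t}$ to convert $w_{i,t}/W_t$ into $p_{i,t}$ so that the first-order sum collapses to $r_{i_t,t}$. Everything else is bookkeeping and an AM--GM-style optimization of $\gamma$.
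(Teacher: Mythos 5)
Your proposal is correct and follows essentially the same route the paper relies on: the lemma is quoted from Auer et al., and the paper's own Appendix~B proof of the group-level analogue (Lemma~\ref{lemma:groupregret}) is exactly this potential-function argument, using $e^x\le 1+x+x^2$ with the exploration floor $p_{i,t}\ge\gamma/K$, the identities $\sum_i p_{i,t}\hat r_{i,t}=r_{i_t,t}$ and $\sum_i p_{i,t}\hat r_{i,t}^2\le\sum_i\hat r_{i,t}$, and the same tuning yielding the $2\sqrt{2TK\ln K}$ bound. The constants in your sketch work out to match the stated bound, so no gap to flag.
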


To achieve no-regret learning under weak regret with high probability, the EXP3.P algorithm modifies EXP3 by adding an upper confidence term to the unbiased estimate $\hat{r}_{i,t}$ in updating arm weights. This modification guarantees that the true reward is upper bounded by the new estimate with high probability. The details of EXP3.P is summarized in Algorithm~\ref{alg:EXP3P} and its regret performance is shown in Lemma 4.

\begin{algorithm}[h!]
   \caption{EXP3.P \cite{bubeck2012regret}}
   \label{alg:EXP3P}
   \begin{algorithmic}
   \STATE {\bfseries Input:} $\mathcal{A}$ the arm set, $\eta>0$, and $\gamma,\beta\in(0,1)$.
      \STATE {\bfseries Initialization:} $w_{i,1}=1,\forall i\in\mathcal{A}$.
       \FOR{$t=1,2,...,T$}
   	        \STATE Let $$p_i(t)=(1-\gamma)\frac{w_{i,t}}{\sum_{j\in\mathcal{A}}w_{j,t}}+\frac{\gamma}{K}, ~~~\forall i\in\mathcal{A}.$$
   	        \STATE Draw arm $i_t$ according to the probabilities $(p_{i,t})_{i\in\mathcal{A}}$.
   	        \STATE Receive reward $r_{i_t,t}$.
   	        \STATE Let 
   	        $$\tilde{r}_{i,t}=\frac{r_{i,t}\mathbb{I}(i_t=i)+\beta}{p_{i,t}}.$$ 
   	        \STATE Update
   	        $$
   	        w_{i,t+1}=w_{i,t}\exp\left(\eta\tilde{r}_{i,t}\right),~~~\forall i\in\mathcal{A}.
   	        $$
       \ENDFOR
\end{algorithmic}
\end{algorithm}

\begin{lemma}[Theorem 3.2  in \cite{bubeck2012regret}]\label{lemma:highprobregret}
For every $\delta_0\in(0,1)$, by choosing $\beta=\sqrt{\frac{\ln(K/\delta_0)}{KT}},\eta=0.95\sqrt{\frac{\ln K}{KT}}, \gamma=1.05\sqrt{\frac{K\ln K}{T}}$, the EXP3.P algorithm guarantees that, for every assignment of the reward sequence, 
\begin{equation}
\max_{i\in\mathcal{A}}\sum_{t=1}^{T}r_{i,t}-\sum_{t=1}^Tr_{i_t,t}\le 5.15\sqrt{KT\ln(K/\delta_0)}
\end{equation}
with probability at least $1-\delta_0$, where $i_t$ is the arm selected by EXP3.P at time $t$.
\end{lemma}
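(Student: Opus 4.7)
The plan is to obtain the bound by combining three standard ingredients: (i) a high-probability inequality showing that the biased importance-weighted estimator $\tilde r_{i,t}$ is a genuine upper confidence surrogate for the true cumulative reward of every arm simultaneously, (ii) the usual exponential-weights regret inequality applied to the $\tilde r_{i,t}$'s, and (iii) a direct identity relating $\sum_i p_{i,t}\tilde r_{i,t}$ to the realized reward $r_{i_t,t}$.

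First I would fix an arm $i\in\mathcal A$ and consider the sequence $Z_{i,t}=r_{i,t}-\tilde r_{i,t}+\beta/p_{i,t}$. A direct computation shows $\mathbb E[\tilde r_{i,t}\mid\mathcal F_{t-1}]=r_{i,t}+\beta/p_{i,t}$, so $(Z_{i,t})$ has zero conditional mean; moreover the nonnegative term $\beta/p_{i,t}$ acts as a deterministic upper-confidence correction. A Bernstein/Freedman-type inequality for martingale differences (mirroring Lemma A.2 of the Bubeck--Cesa-Bianchi survey) then yields, with probability at least $1-\delta_0/K$,
\begin{equation*}
\sum_{t=1}^T r_{i,t}\;\le\;\sum_{t=1}^T \tilde r_{i,t} + \frac{\ln(K/\delta_0)}{\beta}.
\end{equation*}
A union bound over the $K$ arms guarantees this holds simultaneously for all $i$ with probability at least $1-\delta_0$. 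The role of the parameter $\beta$ is precisely to calibrate the bias so that the second-moment contribution in the Bernstein bound cancels against the bias term, giving the clean form above.

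Next I would apply the standard exponential-weights potential argument to the weights $w_{i,t+1}=w_{i,t}\exp(\eta\tilde r_{i,t})$. Let $q_{i,t}=w_{i,t}/\sum_j w_{j,t}$, so $p_{i,t}=(1-\gamma)q_{i,t}+\gamma/K$. Bounding $\ln(W_{t+1}/W_t)$ from above using $e^x\le 1+x+x^2$ for $x\le 1$ (which holds because $\eta\tilde r_{i,t}\le \eta K/\gamma\le 1$ under the chosen parameters) and from below by $\eta\tilde r_{i,t}-\ln K$ for any target arm $i$, one obtains
\begin{equation*}
\max_i\sum_{t=1}^T\tilde r_{i,t}\;\le\;\frac{1}{1-\gamma}\sum_{t=1}^T\sum_i p_{i,t}\tilde r_{i,t}+\frac{\ln K}{\eta}+\frac{\eta}{1-\gamma}\sum_{t=1}^T\sum_i p_{i,t}\tilde r_{i,t}^2.
\end{equation*}
The key algebraic identity $\sum_i p_{i,t}\tilde r_{i,t}=r_{i_t,t}+K\beta$ converts the first sum into $\sum_t r_{i_t,t}+KT\beta$, while a direct computation yields $\sum_i p_{i,t}\tilde r_{i,t}^2\le (1+\beta)^2 K/\gamma$ deterministically, thanks to the uniform exploration floor $p_{i,t}\ge\gamma/K$.

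Combining the two steps and collecting the four error contributions gives a bound of the form $KT\beta+\beta^{-1}\ln(K/\delta_0)+\eta^{-1}\ln K+\eta KT/\gamma+\gamma T$. The main technical obstacle is tracking the numerical constant: each term must be controlled so that, after substituting $\beta=\sqrt{\ln(K/\delta_0)/(KT)}$, $\eta=0.95\sqrt{\ln K/(KT)}$ and $\gamma=1.05\sqrt{K\ln K/T}$, the aggregate constant does not exceed $5.15$. This requires using the $0.95/1.05$ split (rather than the textbook $1/\sqrt{2}$ choices) and verifying that the cross term $\eta KT/\gamma$ and the exploration penalty $\gamma T$ balance against $\ln K/\eta$ to the same order $\sqrt{KT\ln(K/\delta_0)}$. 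Once the arithmetic closes, the claimed high-probability regret bound follows; I would expect the tightness of this constant management, rather than any conceptual difficulty, to be where most of the care is needed.
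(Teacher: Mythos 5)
First, note that the paper does not actually prove this lemma: it is imported verbatim as Theorem 3.2 of \cite{bubeck2012regret}, so the relevant comparison is with the standard proof in that reference. Your ingredients (i) and (iii) match it exactly: the supermartingale/Bernstein argument exploiting the bias term $\beta/p_{i,t}$ does give $\sum_{t}r_{i,t}\le\sum_{t}\tilde r_{i,t}+\beta^{-1}\ln(K/\delta_0)$ simultaneously for all arms with probability $1-\delta_0$ (one also needs $\beta\le 1$ here), and the identity $\sum_i p_{i,t}\tilde r_{i,t}=r_{i_t,t}+K\beta$ is correct.

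The genuine gap is in your handling of the second-moment term. Bounding $\sum_i p_{i,t}\tilde r_{i,t}^2$ by the pointwise worst case $(1+\beta)^2K/\gamma$ and summing over $t$ yields a contribution $\frac{\eta}{1-\gamma}\cdot\frac{(1+\beta)^2KT}{\gamma}$; with the prescribed parameters $\eta K/\gamma=0.95/1.05$, so this term is approximately $0.9\,T$ --- \emph{linear} in $T$ --- and your claim that the cross term $\eta KT/\gamma$ is of order $\sqrt{KT\ln(K/\delta_0)}$ is false, so the arithmetic does not close. The correct route (and the one in the cited proof) is to use $p_{i,t}\tilde r_{i,t}=r_{i,t}\mathbb{I}(i_t=i)+\beta\le 1+\beta$ to get $\sum_i p_{i,t}\tilde r_{i,t}^2\le(1+\beta)\sum_i\tilde r_{i,t}$, sum over $t$ to obtain $(1+\beta)\sum_i\tilde G_i\le(1+\beta)K\max_i\tilde G_i$ where $\tilde G_i=\sum_t\tilde r_{i,t}$, and absorb this into the left-hand side so that the coefficient of $\max_i\tilde G_i$ becomes $1-\gamma-\eta(1+\beta)K>0$; this is precisely where the $0.95/1.05$ split is needed. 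Using $G_{\max}\le T$ then converts the multiplicative loss into an additive $(\gamma+\eta(1+\beta)K)T=O(\sqrt{KT\ln K})$ term, which together with $\beta KT$, $\beta^{-1}\ln(K/\delta_0)$, and $\eta^{-1}\ln K$ produces the constant $5.15$.
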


To minimize shifting regret, the EXP3.S algorithm differs from EXP3 in that a fixed share of the weights from all arms is added to the update process of every arm, i.e., $w_{i,t+1}=w_{i,t}\exp(\gamma\hat{r}_{i,t}/K)+\alpha W_t$ where $W_t=\sum_{j\in\mathcal{A}}w_{j,t}$. It has been shown in \cite{herbster1998tracking} that by sharing a portion of arm weights, the effect of past rewards on future arm selection diminishes. As a result, arm selection relies more on recent rewards to adapt to the time-varying benchmark action sequence. The detailed EXP3.S algorithm is summarized in Algorithm~\ref{alg:EXP3S} and its regret performance is shown in Lemma 5.

\begin{algorithm}[h!]
   \caption{EXP3.S \cite{auer2002nonstochastic}}
   \label{alg:EXP3S}
   \begin{algorithmic}
   \STATE {\bfseries Input:} $\mathcal{A}$ the arm set, $\gamma\in(0,1)$, and $\alpha>0$.
      \STATE {\bfseries Initialization:} $w_{i,1}=1,\forall i\in\mathcal{A}$.
       \FOR{$t=1,2,...,T$}
   	        \STATE Let $$p_i(t)=(1-\gamma)\frac{w_{i,t}}{\sum_{j\in\mathcal{A}}w_{j,t}}+\frac{\gamma}{K}, ~~~\forall i\in\mathcal{A}.$$
   	        \STATE Draw arm $i_t$ according to the probabilities $(p_{i,t})_{i\in\mathcal{A}}$.
   	        \STATE Receive reward $r_{i_t,t}$.
   	        \STATE Let $$\hat{r}_{i,t}=\frac{r_{i,t}}{p_{i,t}}\mathbb{I}(i_t=i).$$
   	        \STATE Update
   	        $$
   	        w_{i,t+1}=w_{i,t}\exp\left(\frac{\gamma\hat{r}_{i,t}}{K}\right)+\frac{e\alpha}{K}\sum_{i\in\mathcal{A}}w_{i,t}.
   	        $$
       \ENDFOR

\end{algorithmic}
\end{algorithm}

\begin{lemma}[Corollary 8.3 in \cite{auer2002nonstochastic}]\label{lemma:shiftregret}
By choosing $\gamma=\sqrt{\frac{KV\ln (KT)}{T}}$ and $\alpha=1/T$, the EXP3.S algorithm guarantees that, for every assignment of the reward sequence $((r_{1,t},...,r_{K,t}))_{t=1}^{T}$ and every benchmark arm sequence $a^T$ with $H(a^T)\le V$,
\begin{equation}
\begin{aligned}
\sum_{t=1}^{T}r_{a_t,t}-&\mathbb{E}_{\textrm{EXP3.S}}\left[\sum_{t=1}^{T}r_{i_t,t}\right]\le 4\sqrt{VKT\ln (KT)},
\end{aligned}
\end{equation}
where $i_t$ is the arm selected by EXP3.S at time $t$.
\end{lemma}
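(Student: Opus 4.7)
The plan is to adapt the standard exponential-weights potential analysis of EXP3 to the fixed-share update in EXP3.S. I will define the potential $W_t=\sum_{i=1}^K w_{i,t}$ and sandwich $\ln(W_{T+1}/W_1)$ between an upper bound, derived from the per-step multiplicative growth of $W_t$, and a lower bound, derived by following the weight of the benchmark sequence's current arm. Two ingredients enter repeatedly: the conditional unbiasedness $\mathbb{E}[\hat r_{i,t}\mid\mathcal{F}_{t-1}]=r_{i,t}$, and the uniform floor $p_{i,t}\ge\gamma/K$, which guarantees $\gamma\hat r_{i,t}/K\in[0,1]$ so that the inequality $e^x\le 1+x+(e-2)x^2$ is applicable.

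For the upper bound, I would apply the above inequality to each exponential factor $\exp(\gamma\hat r_{i,t}/K)$, add the fixed-share contribution $e\alpha W_t$, take logarithms using $\ln(1+z)\le z$, and telescope across $t$. The bound $q_{i,t}:=w_{i,t}/W_t\le p_{i,t}/(1-\gamma)$ combined with $\sum_i p_{i,t}\hat r_{i,t}=r_{i_t,t}$ reduces the first-order contribution to a linear factor of the algorithm's cumulative reward; the second-order contribution is controlled via $\sum_i p_{i,t}\hat r_{i,t}^2\le\sum_i \hat r_{i,t}$. The total upper bound on $\ln(W_{T+1}/W_1)$ is therefore linear in $\sum_t r_{i_t,t}$ plus an $O(\gamma T)$ second-order term and an $e\alpha T$ mixing contribution.

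For the lower bound, for any benchmark $a^T$ with $H(a^T)\le V$, I will track $\ln w_{a_t,t+1}$ step by step across the horizon. At a non-switch step, the exponential part of the update gives $w_{a_t,t+1}\ge w_{a_{t-1},t}\exp(\gamma\hat r_{a_t,t}/K)$, contributing $(\gamma/K)\hat r_{a_t,t}$ to the log-weight. At each of the at most $V-1$ switch steps, the fixed-share floor $w_{a_t,t+1}\ge(e\alpha/K)W_t$ takes over, absorbing the previous arm's accumulated weight at a single log-cost of $\ln(K/(e\alpha))$. Telescoping yields a lower bound of the form $(\gamma/K)\sum_t\hat r_{a_t,t}-V\ln(K/(e\alpha))-\ln K$ on $\ln(W_{T+1}/W_1)$.

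Combining the two bounds, multiplying through by $K/\gamma$, and taking expectations (so that $\hat r$'s become $r$'s and the second-order term is controlled via $\sum_i r_{i,t}\le K$), I arrive at a regret bound of the form
\begin{equation*}
\frac{KV\ln(K/\alpha)}{\gamma}+(e-1)\gamma T+e\alpha KT.
\end{equation*}
Setting $\alpha=1/T$ and $\gamma=\sqrt{KV\ln(KT)/T}$ balances the dominant terms and, once constants are tracked, produces the claimed $4\sqrt{VKT\ln(KT)}$ bound. The main obstacle I expect is the switch bookkeeping in the lower bound: one must show that the fixed-share floor cleanly decouples the benchmark's new arm from its previous arm's accumulated weight so that the per-switch cost collapses to a single $\ln(K/(e\alpha))$ term rather than accumulating cross-segment artifacts, and that the unbiased-estimator second-moment term does not degrade the second-order contribution when the expectation is taken.
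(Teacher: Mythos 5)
Your proposal is correct and follows essentially the same route as the source of this lemma (Corollary 8.3 of Auer et al., which the paper simply cites, and whose fixed-share potential argument the paper itself replicates for the group-level analogue in Appendix C): sandwich $\ln(W_{T+1}/W_1)$, use $p_{i,t}\ge\gamma/K$ and unbiasedness, and handle each benchmark switch via the floor $w_{a,t+1}\ge(e\alpha/K)W_t$ on a per-segment basis. The only slips are constant-level bookkeeping (the mixing contribution enters the regret as $e\alpha KT/\gamma$ rather than $e\alpha KT$, and the dropped switch-step reward is recovered by absorbing a factor $e$ into the per-segment cost $\ln(K/\alpha)$), neither of which affects the claimed $4\sqrt{VKT\ln(KT)}$ bound with $\alpha=1/T$ and the stated $\gamma$.
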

\section{Proof of Lemma \ref{lemma:groupregret}}\label{apx:groupregret}
Let $q_{\ell,s}=\prod_{\sigma=1}^{s}\exp(\gamma_1\hat{y}_{\ell,\sigma})$ denote the weight of group $\ell$ at epoch $s$ where
\begin{equation}
\begin{aligned}
\hat{y}_{\ell,s}&=\frac{y_{\ell,s}}{q_{\ell,s}}\mathbb{I}(\ell_s=\ell)\le \frac{\gamma_1}{L},\\
q_{\ell,s}&=(1-\gamma_1)\frac{g_{\ell, s}}{\sum_{\ell=1}^{L}g_{\ell,s}}+\frac{\gamma_1}{L}\ge \frac{\gamma_1}{L}.
\end{aligned}
\end{equation}
Let $G_{s}=\sum_{\ell=1}^{L}g_{\ell,s}$. We have
\begin{equation}\label{eq:Gt}
\begin{aligned}
\frac{G_{s+1}}{G_{s}}&=\sum_{\ell=1}^{L}\frac{g_{\ell,s}e^{\frac{\gamma_1\hat{y}_{\ell,s}}{L}}}{G_s}=\sum_{\ell=1}^{L}\frac{q_{\ell,s}-\frac{\gamma_1}{L}}{1-\gamma_1}e^{\frac{\gamma_1\hat{y}_{\ell,s}}{L}}\\
&\le\sum_{\ell=1}^{L}\frac{q_{\ell,s}-\frac{\gamma_1}{L}}{1-\gamma_1}\left(1+\frac{\gamma_1\hat{y}_{\ell,s}}{L}+\left(\frac{\gamma_1\hat{y}_{\ell,s}}{L}\right)^2\right)\\
&\le 1+\frac{\gamma_1/L}{1-\gamma_1}\sum_{\ell=1}^{L}q_{\ell,s}\hat{y}_{\ell,s}+\frac{(\gamma_1/L)^2}{1-\gamma_1}\sum_{\ell=1}^{L}q_{\ell,s}\hat{y}_{\ell,s}^2.
\end{aligned}
\end{equation}
The second inequality holds due to the facts that $e^x\le 1 + x + x^2,\forall x\in [0,1]$ and $\frac{\gamma_1\hat{y}_{\ell,s}}{L}\in [0,1]$. Notice that 
\begin{equation}
\begin{aligned}
\sum_{\ell=1}^{L}q_{\ell,s}\hat{y}_{\ell,s}&=y_{\ell_s,s},\\
\sum_{\ell=1}^{L}q_{\ell,s}\hat{y}_{\ell,s}^2&=q_{\ell_s,s}\frac{y_{\ell_s,s}}{q_{\ell_s,s}}\le \hat{y}_{\ell_s,s}=\sum_{\ell=1}^{L}\hat{y}_{\ell,s}.
\end{aligned}
\end{equation}

Taking logarithms on both sides of (\ref{eq:Gt}) and summing over $s$ gives
\begin{align}
\ln\frac{G_{S+1}}{G_{1}}\le \frac{\gamma_1/L}{1-\gamma_1}\sum_{s=1}^{S}y_{\ell_s,s}+\frac{(\gamma_1/L)^2}{1-\gamma_1}\sum_{s=1}^{S}\sum_{\ell=1}^{L}\hat{y}_{\ell,s}.
\end{align}
Meanwhile, for every $\ell$,
\begin{equation}
\begin{aligned}
\ln\frac{G_{S+1}}{G_{1}}&\ge\ln\frac{g_{\ell,S+1}}{G_1}=\ln\frac{g_{\ell,1}e^{\frac{\gamma_1}{L}\sum_{s=1}^{S}\hat{y}_{\ell,s}}}{G_1}\\
&=\frac{\gamma_1}{L}\sum_{s=1}^{S}\hat{y}_{\ell,s}-\ln L.
\end{aligned}
\end{equation}
Therefore, we have
\begin{align}\label{eq:sumy}
\sum_{s=1}^{S}y_{\ell_s,s}\ge (1-\gamma_1)\sum_{s=1}^{S}\hat{y}_{\ell,s}-\frac{L\ln L}{\gamma_1}-\frac{\gamma_1}{L}\sum_{s=1}^{S}\sum_{\ell=1}^{L}\hat{y}_{\ell,s}.
\end{align}
We take expectation on both sides of (\ref{eq:sumy}) over the randomness of $y_{\ell,s}$ for all $\ell$ and $s$ (more specifically, the randomness of the arm-level EXP3 algorithm run on the $\ell$-the group within the $r$-th epoch), conditioned on the sequence of selected arm groups $(\ell_1,...,\ell_s)$ and past observations $\{y_{\ell_{\sigma},\sigma}\}_{\sigma=1}^{s}$. Note that for every fixed sequence of reward assignment, $y_{\ell,s}$ is independent across $\ell$ and $s$. Moreover, $y_{\ell,s}$ is independent of the past history of group selection, i.e., $(\ell_1,...,\ell_s)$. Therefore, we can obtain
\begin{equation}
\begin{aligned}
\sum_{s=1}^{S}x_{\ell_s,s}\ge& (1-\gamma_1)\sum_{s=1}^{S}\frac{x_{\ell,s}}{q_{\ell,s}}\mathbb{I}\{\ell_s=\ell\}-\frac{L\ln L}{\gamma_1}-\frac{\gamma_1}{L}\sum_{s=1}^{S}\sum_{\ell=1}^{L}\frac{{x}_{\ell,s}}{q_{\ell,s}}\mathbb{I}\{\ell_s=\ell\}.
\end{aligned}
\end{equation}
We further take expectation over the randomness of $(\ell_1,...,\ell_S)$ selected by the group-level EXP3 algorithm. Notice that
\begin{align}
	\mathbb{E}_{\ell_s}\left[\frac{x_{\ell,s}}{q_{\ell,s}}\mathbb{I}\{\ell_s=\ell\}\right]=\frac{x_{\ell,s}}{q_{\ell,s}}q_{\ell,s}+0\cdot(1-q_{\ell,s})=x_{\ell,s}.
\end{align}
Therefore, we have
\begin{equation}
\begin{aligned}
\mathbb{E}_{\textrm{Group-EXP3}}\left[\sum_{s=1}^{S}x_{\ell_s,s}\right]\ge (1-\gamma_1)\sum_{s=1}^{S}x_{\ell,s}-\frac{L\ln L}{\gamma_1}-\gamma_1S.
\end{aligned}
\end{equation}
Since $\ell$ is chosen arbitrarily, by choosing $\gamma_1=\sqrt{\frac{L\ln L}{2S}}$, we can conclude that
\begin{align}
\max_{1\le\ell\le  L}\sum_{s=1}^{S}x_{\ell,s}-\mathbb{E}_{\textrm{Group-EXP3}}\left[\sum_{s=1}^{S}x_{\ell_s,s}\right]\le 2\sqrt{2SL\ln L}.
\end{align}

\section{Proof of Lemma \ref{lemma:groupregret-S}}\label{apx:groupregret-S}
Let $g_{\ell,s}$ and $q_{\ell,s}$ denote the weight and the selection probability of group $\ell$ at epoch $s$. Let $G_s=\sum_{\ell=1}^{L}g_{\ell,s}$. For every $h^S=(h_1,...,h_S)$ such that $H(h^S)\le V$, consider the $V$-partition of the time horizon $[1,S]$:
\begin{align}
[S_1,...,S_2),[S_2,...,S_3),...,[S_V,...S_{V+1}),
\end{align}
where $S_1=1$ and $S_{V+1}=S+1$, such that $h_s$ is fixed for $s\in[S_v,S_{v+1}),\forall v=1,...,V$. For each segment $[S_v,S_{v+1})$:

\begin{equation}
\begin{aligned}
\frac{G_{s+1}}{G_{s}}&=\sum_{\ell=1}^{L}\frac{g_{\ell,s+1}}{G_s}=\sum_{\ell=1}^{L}\frac{g_{\ell,s}e^{\gamma_1\hat{y}_{\ell,s}/L}+\frac{e\alpha G_{s}}{L}}{G_s}\\
&=\sum_{\ell=1}^{L}\frac{q_{\ell,s}-\frac{\gamma_1}{L}}{1-\gamma_1}e^{\gamma_1\hat{y}_{\ell,s}/L}+e\alpha\\
&\le\sum_{\ell=1}^{L}\frac{q_{\ell,s}-\frac{\gamma_1}{L}}{1-\gamma_1}\left(1+\frac{\gamma_1}{L}\hat{y}_{\ell,s}+\left(\frac{\gamma_1}{L}\right)^2\hat{y}_{\ell,s}^2\right)+e\alpha\\
&\le 1+\frac{\gamma_1/L}{1-\gamma_1}\sum_{\ell=1}^{L}q_{\ell,s}\hat{y}_{\ell,s}+\frac{(\gamma_1/L)^2}{1-\gamma_1}\sum_{\ell=1}^{L}q_{\ell,s}\hat{y}_{\ell,s}^2+e\alpha.
\end{aligned}
\end{equation}
We can further derive that
\begin{equation}
\begin{aligned}
\ln\frac{G_{s+1}}{G_s}&\le \frac{\gamma_1/L}{1-\gamma_1}\sum_{\ell=1}^{L}q_{\ell,s}\hat{y}_{\ell,s}+\frac{(\gamma_1/L)^2}{1-\gamma_1}\sum_{\ell=1}^{L}q_{\ell,s}\hat{y}_{\ell,s}^2+e\alpha\\
&\le \frac{\gamma_1/L}{1-\gamma_1}y_{\ell_s,s}+\frac{(\gamma_1/L)^2}{1-\gamma_1}\sum_{\ell=1}^{L}\hat{y}_{\ell,s}+e\alpha.
\end{aligned}
\end{equation}
Summing over $s=S_v,...,S_{v+1}-1$, we have
\begin{equation}
\begin{aligned}
\ln \frac{G_{S_{v+1}}}{G_{S_v}}\le &\frac{\gamma_1/L}{1-\gamma_1}\sum_{s=S_{v}}^{S_{v+1}-1}y_{\ell_s,s}+\frac{(\gamma_1/L)^2}{1-\gamma_1}\sum_{s=S_v}^{S_{v+1}-1}\sum_{\ell=1}^{L}\hat{y}_{\ell,s}+e\alpha(S_{v+1}-S_v).
\end{aligned}
\end{equation}
By abuse of notation, we let $h_v$ be the action in this segment and then
\begin{equation}
\begin{aligned}
g_{h_v,S_{v+1}}&\ge g_{h_v,S_{v}+1}\exp\left(\frac{\gamma_1}{L}\sum_{s=S_v+1}^{S_{v+1}-1}\hat{y}_{h_v,s}\right)\\
&\ge \frac{e\alpha}{L}G_{S_{v}}\exp\left(\frac{\gamma_1}{L}\sum_{s=S_v+1}^{S_{v+1}-1}\hat{y}_{h_v,s}\right)\\
&\ge \frac{\alpha}{L}G_{S_{v}}\exp\left(\frac{\gamma_1}{L}\sum_{s=S_v}^{S_{v+1}-1}\hat{y}_{h_v,s}\right),
\end{aligned}
\end{equation}
where the last inequality holds since 
\begin{align}
\hat{y}_{h_v,s}\le1/q_{h_v,s}\le L/\gamma_1, \forall s.
\end{align}
Therefore, we have
\begin{align}
\ln \frac{G_{S_{v+1}}}{G_{S_v}}\ge\ln \left(\frac{\alpha}{L}\right)+\frac{\gamma_1}{L}\sum_{s=S_v}^{S_{v+1}-1}\hat{y}_{h_v,s},
\end{align}
and as a consequence, 
\begin{equation}
\begin{aligned}
\sum_{s=S_{v}}^{S_{v+1}-1}y_{\ell_s,s}\ge& (1-\gamma_1)\sum_{s=S_v}^{S_{v+1}-1}\hat{y}_{h_v,s}-\frac{L\ln(L/\alpha)}{\gamma_1}-\frac{\gamma_1}{L}\sum_{s=S_v}^{S_{v+1}-1}\sum_{\ell=1}^{L}\hat{y}_{\ell,s}-\frac{e\alpha L(S_{v+1}-S_v)}{\gamma_1}.
\end{aligned}
\end{equation}
We sum over all segments $v$ and take expectation on the both side of the inequality, using a similar argument as that used in the proof of Lemma \ref{lemma:groupregret}, we can obtain that
\begin{equation}
\begin{aligned}
\sum_{s=1}^{S}x_{h_s,s}-\mathbb{E}_{\textrm{Group-EXP3.S}}\left[\sum_{s=1}^{S}x_{\ell_s,s}\right]\le &\gamma_1 S + \frac{LV\ln(LS)}{\gamma_1}+\gamma_1 S +\frac{eL}{\gamma_1}
\end{aligned}
\end{equation}
if we choose $\alpha=1/S$. We further choose $\gamma_1=\sqrt{\frac{LV\ln(LS)}{S}}$ to obtain the conclusion of Lemma 3 (assuming without loss of generality that $V\ln(LS)\ge e$).

\section{Proof of Theorem \ref{thm:weakregretML}}\label{apx:weakregretML}
The proof follows the same structure with the one in the proof of Theorem \ref{thm:weakregretUB}. Let $i_{\max}$ be the arm with the greatest cumulative reward. Let $\mathcal{A}_{\ell_{\max}}$ and $\mathcal{B}_{h_{\max}}^{\ell_{\max}}$ be the group and subgroup to which $i_{\max}$ belongs. We decompose the expected weak regret of HLMC-3L as follows:
\begin{eqnarray}\nonumber
\mathbb{E}_{\textrm{HLMC-3L}}[R_{\textrm{w}}(T)]&\le & (C_{\max}-C_{\max}')+ (C_{\max}'-C_{\max}'') + (C_{\max}'' - C_{\textrm{HLMC-3L}}) \\
&=& R_1(T) + R_2(T) + R_3(T),
\end{eqnarray}
where 
\begin{eqnarray}
C_{\max}&=&\sum_{t=1}^{T}r_{i_{\max},t},\\
C_{\max}'&=&\sum_{s=1}^{S_1}\sum_{\tau=1}^{S_2}\mathbb{E}_{\textrm{Arm-EXP3}(\mathcal{B}_{h_{\max}}^{\ell_{\max}})}\left[\sum_{t\in\mathcal{I}_{\tau}^s}r_{i_t,t}\right],\\
C_{\max}''&=&\sum_{s=1}^{S_1}\mathbb{E}_{\textrm{Subgroup-EXP3}(\mathcal{A}_{\ell_{\max}})}\left[\sum_{\tau=1}^{S_2}\mathbb{E}_{\textrm{Arm-EXP3}(\mathcal{B}_{h_{\tau}}^{\ell_{\max}})}\left[\sum_{t\in\mathcal{I}_{\tau}^s}r_{i_t,t}\right]\right],\\\nonumber
C_{\textrm{HLMC-3L}}&=&\mathbb{E}_{\textrm{HLMC-3L}}\left[\sum_{t=1}^{T}r_{i_t,t}]\right],\\
&=&\mathbb{E}_{\textrm{Group-EXP3}}\left[\sum_{s=1}^{S_1}\mathbb{E}_{\textrm{Subgroup-EXP3}(\mathcal{A}_{\ell_s})}\left[\sum_{\tau=1}^{S_2}\mathbb{E}_{\textrm{Arm-EXP3}(\mathcal{B}_{h_{\tau}}^{\ell_{s}})}\left[\sum_{t\in\mathcal{I}_{\tau}^s}r_{i_t,t}\right]\right]\right].
\end{eqnarray}

Specifically, $R_1(T)$ corresponds to the arm-level reward loss due to not playing the best arm, assuming that group  $\mathcal{A}_{\ell_{\max}}$ and subgroup $\mathcal{B}_{h_{\max}}^{\ell_{\max}}$ are selected at all epochs and subepochs. By applying Lemma \ref{lemma:armregret} at every subepoch, we obtain that
\begin{equation}\label{eq:R1ml}
R_1(T)\le 2S_1S_2\sqrt{2S_3N_3\ln N_3}.
\end{equation}

For $R_1(T)$, which corresponds to the subgroup-level reward loss due to not selecting subgroup $\mathcal{B}_{h_{\max}}^{\ell_{\max}}$ at all subepochs, assuming that group $\mathcal{A}_{\ell_{\max}}$ is selected at all epochs, we apply Lemma \ref{lemma:groupregret} at every epoch by defining 
\begin{equation}
x_{h,\tau}^{\ell,s}=\mathbb{E}_{\textrm{Arm-EXP3}(\mathbb{B}_{h}^{\ell})}\left[\frac{1}{S_3}\sum_{t\in\mathcal{I}_{\tau}^s}r_{i_t,t}\right].
\end{equation}
Then we obtain that
\begin{equation}\label{eq:R2ml}
R_2(T)\le 2S_1S_3\sqrt{2S_2N_2\ln N_2}.
\end{equation}

Finally, $R_3(T)$ corresponds to the group-level reward loss due to not selecting group $\mathcal{A}_{\ell_{\max}}$ at all epochs. By defining 
\begin{equation}
z_{\ell,s}=\mathbb{E}_{\textrm{Subgroup-EXP3}(\mathbb{A}_{\ell})}\left[\frac{1}{S_2}\sum_{\tau=1}^{S_2}x_{h_\tau,\tau}^{\ell,s}\right],
\end{equation}
we can apply Lemma \ref{lemma:groupregret} again to obtain that
\begin{equation}\label{eq:R3ml}
R_3(T)\le 2S_2S_3\sqrt{2S_1N_1\ln N_1}.
\end{equation}

The upper bound in Theorem \ref{thm:weakregretML} is obtained by combining (\ref{eq:R1ml}), (\ref{eq:R2ml}), and (\ref{eq:R3ml}) together and selecting $$S_i=\left\lceil\frac{T^{1/3}(N_i\ln N_i)^{2/3}}{(\prod_{j\neq i}N_j\ln N_j)^{1/3}}\right\rceil,\forall i=1,2,3.$$

\end{document}